\documentclass[12pt]{article}

\usepackage{hyperref}
\hypersetup{
    unicode = false,
    pdftoolbar = true,
    pdfmenubar = true,
    pdffitwindow = true,
    pdftitle = {metadistance},
    pdfauthor = {Hall\'e-Hannan et al.},
    pdfsubject = {metadistance},
    pdfkeywords = {metadistance},
    pdfnewwindow = true,
    colorlinks = true,
    linkcolor = blue,
    citecolor = blue,
    filecolor = black,
    urlcolor = blue,
    breaklinks = true
}

\usepackage[margin=1in]{geometry}
\usepackage{booktabs}
\usepackage{comment}
\usepackage[english]{babel}
\usepackage[T1]{fontenc}
\usepackage{lmodern}
\usepackage[export]{adjustbox}
\usepackage{booktabs}
\usepackage{multirow}
\usepackage{diagbox}

\usepackage{graphicx}
\usepackage{epstopdf}

\usepackage[toc,page]{appendix}

\usepackage{amsmath}

\usepackage{stackengine}
\usepackage{mathtools}
\usepackage{stackrel}
\usepackage{dsfont}

\usepackage{pifont}

\usepackage{mathrsfs}   
\usepackage{changepage}   
\usepackage{enumitem}

\usepackage{amsfonts}
\usepackage{makecell}
\usepackage{amssymb}
\usepackage{lipsum}
\usepackage{multicol}
\usepackage[font=small]{caption}
\usepackage{subcaption}
\usepackage{float}
\usepackage[separate-uncertainty=true]{siunitx}
\usepackage{amsmath}
\usepackage{graphicx}
\usepackage{tikz}

\usepackage{forest}
\usepackage{stanli}
\usetikzlibrary{positioning,shapes,shadows,arrows ,automata,arrows.meta,decorations, decorations.text,calligraphy}
\usetikzlibrary{matrix}
\usepackage{pgfplots}
\pgfplotsset{compat=1.18}

\usepackage[normalem]{ulem} 

\def\getangle(#1)(#2)#3{%
  \begingroup%
    \pgftransformreset%
    \pgfmathanglebetweenpoints{\pgfpointanchor{#1}{center}}{\pgfpointanchor{#2}{center}}%
    \expandafter\xdef\csname angle#3\endcsname{\pgfmathresult}%
  \endgroup%
}

\usepackage{xfrac}
\usepackage{adjustbox}
\usepackage{standalone}
\usepackage{changepage}
\usepackage{yhmath}

\usepackage[textwidth=21mm,backgroundcolor=yellow,linecolor=orange,textsize=tiny]{todonotes}

\usepackage{arydshln}

\usepackage{xcolor}
\definecolor{Red}{rgb}{1,0,0}
\definecolor{Green}{rgb}{0,.6,0}
\definecolor{Blue}{rgb}{0,0,1}

\newcommand{\blr}{\color{black}}


       
\newcommand{\meta}{\mathrm{m}}        
\newcommand{\metadec}{\mathrm{md}}    
\newcommand{\neutral}{ \mathrm{neu} }
\newcommand{\decreed}{\mathrm{dec}}


\newcommand{\integer}{\mathrm{int}} 
\newcommand{\continuous}{\mathrm{con}} 
\newcommand{\cat}{\mathrm{cat}}

\newcommand{\dist}{\mathrm{dist}}
\newcommand{\edist}{\overline{\mathrm{dist}}}


\newcommand{\restrictedset}{\overline{\mathcal{X}}_i^r / \phantom{}_{\overline{\parents}_i^r}}


\newcommand{\ancestors}{\mathrm{anc}}
\newcommand{\parents}{\mathrm{par}}

\newcommand{\nomad}{{\sf NOMAD}}
\newcommand{\pytorch}{{\sf PyTorch}}










\usepackage{fancyhdr}
\usepackage{csquotes}
\usepackage{xcolor}


\usepackage{amsthm}
\theoremstyle{plain}
\newtheorem{mydef}{Definition}
\newtheorem{theorem}{Theorem}
\newtheorem{corollary}{Corollary}
\newtheorem{assumption}{Assumption}

\title{{A distance for mixed-variable and hierarchical domains with meta variables}
\thanks{GERAD and Department of Mathematics and Industrial Engineering, Polytechnique  Montr\'eal.}
\thanks{IRIT, UT Capitole, and Universit\'e de Toulouse, France.
}}

\author{ 
\href{https://www.gerad.ca/en/people/edward-halle-hannan}{Edward Hall\'e-Hannan}
\thanks{\href{mailto:edward.halle-hannan@polymtl.ca}{\url{edward.halle-hannan@polymtl.ca}}}
\and
\href{https://www.gerad.ca/Charles.Audet/}{Charles Audet}
\thanks{\url{https://www.gerad.ca/Charles.Audet}}
\and
\href{https://www.gerad.ca/en/people/youssef-diouane}{Youssef Diouane}
\thanks{\url{https://www.polymtl.ca/expertises/diouane-youssef}}
\and
\href{https://www.gerad.ca/Sebastien.Le.Digabel/}{S\'ebastien Le~Digabel}
\thanks{\url{https://www.gerad.ca/Sebastien.Le.Digabel}}
\and
\href{https://scholar.google.com/citations?user=yVohWjcAAAAJ}{Paul Saves}
\thanks{\href{mailto:paul.saves@irit.fr}{\url{paul.saves@irit.fr}}}
}

\begin{document}

\maketitle

\begin{center}
    \text{\Large 
    \textbf{Abstract}}
\end{center}

\begin{adjustwidth}{30pt}{30pt}

Heterogeneous datasets emerge in various machine learning and optimization applications that feature different input sources, types or formats.
Most models or methods do not natively tackle heterogeneity.
Hence, such datasets are often partitioned into smaller and simpler ones, which may limit the generalizability or performance, especially when data is limited.
The first main contribution of this work is a modeling framework that generalizes hierarchical, tree-structured, variable-size or conditional search frameworks.
The framework models mixed-variable and {\blr hierarchical} domains in which variables may be continuous, integer, or categorical, with some identified as meta when they influence the structure of the problem.
The second main contribution is a novel distance that compares any pair of mixed-variable points that do not share the same variables, allowing to use whole heterogeneous datasets that reside in mixed-variable and {\blr hierarchical domains with meta variables}.
{\blr The contributions are illustrated through regression and classification experiments using simple distance-based models applied to datasets of hyperparameters with corresponding performance scores.} \\

\noindent \textbf{Keywords.} Machine learning, optimization, heterogeneous datasets, {\blr hierarchical}, mixed-variable, meta variables, distances.

\end{adjustwidth}

\noindent
{\small
\textbf{Funding:} This research is funded by a Natural Sciences and Engineering Research Council of Canada (NSERC) PhD Excellence Scholarship (PGS D), a Fonds de Recherche du Qu\'ebec (FRQNT) PhD Excellence Scholarship and an Institut de l'{\'E}nergie Trottier (IET) PhD Excellence Scholarship, 
as well as by the NSERC discovery grants  
    RGPIN-2020-04448 (Audet),
    RGPIN-2024-05093 (Diouane)
    and RGPIN-2018-05286 (Le~Digabel).
The work of Saves is part of the activities of ONERA - ISAE - ENAC joint research group. His research, presented in this paper, has been performed in the framework of the COLOSSUS project (Collaborative System of Systems Exploration of Aviation Products, Services and Business Models) and has received funding from the European Union Horizon Europe program under grant agreement n${^\circ}$ 101097120 and in the MIMICO research project funded by the Agence Nationale de la Recherche (ANR) n$^o$ ANR-24-CE23-0380. 
}

{\blr 

\section{Introduction} 
\label{sec:intro}

Machine learning tasks or optimization problems with heterogeneous datasets face inherent challenges that arise from several reasons, such as the generation of data from different sources and the presence of various types of variables.
%
Although recent models, such as large language models, can natively process heterogeneous objects~\cite{ReHaHoBu2024}, these models require massive
data and computational resources.
%
For this reason, simpler and less costly models, that do not inherently handle heterogeneity, are often employed in practice.
%
%
Heterogeneous datasets are often partitioned into homogeneous data subsets of similar types that are easier to tackle: this approach is undesirable when data is limited or \textit{expensive-to-generate}~\cite{AlNeTr19, JiLi05, MScGhazaleh}, since the performance of methods and generalizability of models can be negatively impacted.

This work has two main contributions: a comprehensive modeling framework and a novel distance function between any pair of points in a heterogeneous dataset.
These contributions allow to use whole mixed-variable datasets, in which variables vary from one data point to another.
The motivation is to allow affordable models and data-driven optimization methods to handle such data heterogeneity, especially when data is limited or expensive.
The work opens up the possibility for research on enhancing models like $K$-nearest neighbors (KNN) or Gaussian processes, and optimization methods for expensive-to-evaluate functions.

%
%
%
%
%

\subsection{Scope of the work}
\label{sec:scope}

%
%
%
%

%
The present work focuses on modeling a heterogeneous domain $\mathcal{X}$ with two fundamental characteristics.
First, $\mathcal{X}$  is \textit{mixed-variable}, meaning that a point is composed of finitely many variables from any type amongst categorical ($\cat$), integer ($\integer$) or continuous ($\continuous$).
Second, $\mathcal{X}$ is \textit{hierarchical}, signifying that variables have different degrees of importance and influence between each other:
this work studies
domains and datasets
in which two mixed-variable points $x,y\in \mathcal{X}$ do not necessarily share the same variables and/or are not necessarily subject to same bounds.
The following example illustrates a domain defining admissible hyperparameters of a Multi-Layer Perceptron (MLP).
\begin{figure}[htb!]
\centering
    \scalebox{0.85}{\begin{tikzpicture}
\color{black}
    \node [shape=rectangle, align=center](table1) at (0,0) {
        \begin{tabular}{lcc} 
        \toprule
            HP  & Bounds & Type  \\ \midrule
            Learning rate $(r)$ & $]0,1[$ & cont \\ 
            Optimizer $(o)$ & $\{ \texttt{ASGD}, \texttt{ADAM} \}$ & cat  \\
            \bottomrule
        \end{tabular}};

    \node [shape=rectangle, align=center, xshift=-5cm, yshift=-3cm, at=(table1)](table2){
        \begin{tabular}{lcc} 
        \toprule
            HP  & Bounds  & Type \\ \midrule
            Update $(\alpha)$  & $]0,1[$ & cont \\
            \#  layers $(l)$  & $\{0,1\}$ &  int \\ 
            \# units  $(u_i)$  & $U_{\texttt{ASGD}}$ & int \\
            \bottomrule
        \end{tabular}};

     \node [shape=rectangle, align=center, xshift=5cm, yshift=-3cm, at=(table1)] (table3)  {
    \begin{tabular}{lcc} 
    \toprule
        HP  & Bounds  & Type \\ \midrule
        Average $(\beta)$   & $]0,1[$ & cont \\
        \#  layers $(l)$    & $\{0,1,2\}$ & int \\ 
         \# units  $(u_i)$  & 
         $U_{\texttt{ADAM}}$ & int \\
        \bottomrule
    \end{tabular}};
   
    \draw[->] (table1)--(table2) node[midway, anchor=west, above, xshift=-2cm, yshift=-0.25cm] {if $o=\texttt{ASGD}$};
    \draw[->] (table1)--(table3) node[midway, anchor=east, above, xshift=2cm, yshift=-0.25cm] {if $o=\texttt{ADAM}$};
\end{tikzpicture}}
\caption{\blr Hyperparameters of a MLP involving a hierarchical and mixed-variable domain.}
\label{fig:intro_ex}
\end{figure}

\noindent 
%
The mixed-variable domain from Figure~\ref{fig:intro_ex} is hierarchical for the following reasons.
First, the optimizer variable $o$ controls the bounds on the number of layers and units, and both optimizers {\tt ASGD} and {\tt ADAM} possess their own hyperparameter,
$\alpha$ and $\beta$, respectively.
%
Second, the number of layers $l$ controls how many units $u_i$ are present in each layer, since $0 < i \leq l$.
%
%
%

The following terminology is used throughout the document.
A {\em point} $x$ refers to a generic element of the domain $\mathcal{X}$, whereas a {\em data point} $x_{(i)}$, with $i \in \{1,2, \ldots, N \}$, refers to a specific element of $\mathcal{X}$ from a dataset of $N$ points.
For example, $x \in \mathcal{X}$ is a mixed-variable vector containing hyperparameters, and  $x_{(1)}$ is a known point where $r = 0.5, o = \texttt{ADAM}, \beta = 0.6$ and $l = 0$.
The datasets are always related to a domain $\mathcal{X}$.
In supervised learning or data-driven optimization, a target function $f:\mathcal{X} \to \mathbb{R}$ is modeled or optimized with a dataset $\{(x_{(i)}, f(x_{(i)}))\}_{i=1}^{N}$ of $N \in \mathbb{N}$ data pairs.
%
In unsupervised learning, the dataset $\{ x_{(i)}\}_{i=1}^{M}$ consists of $M \in \mathbb{N}$ data points. 
%

%
In this work, a hierarchical domain is modeled with a graph structure that is specific to the domain.
However, the proposed distance, called the \textit{meta distance}, is not defined for graphs.
The following table distinguishes the meta distance from distances defined for graphs.
 \begin{table}[htb!]
    \centering
    \footnotesize 
    \renewcommand{\arraystretch}{1.4} 
    \begin{tabular}{p{0.45\textwidth} | p{0.45\textwidth}} 
        \hline
        \multicolumn{1}{c|}{Meta distance (present work)} & \multicolumn{1}{c}{Graph distances~\cite{BeFaPeSkSu2005, RaLiShAmTa2018, SaFu1983, Zh2020}} \\
        \hline
        \hline
        Uses a single graph structure to treat a hierarchical domain. & Multiple graphs, each with its own structure and architecture. \\
        \hline
        Compare points. & Compare graphs with nodes and edges. \\
        \hline
        Uses standard distances, such as Euclidean or Hamming. & Uses graph operations, such as node or edge insertions, structure comparisons or paths. \\
        \hline
        Useful for ML or optimization involving mixed-variable points with different variables. & Useful for graph-based ML, such as graph matching, graph similarity or computer vision. \\
        \hline
    \end{tabular}
    \caption{\blr Comparison between the meta distance and graph distances.}
    \label{tab:distance_comparison}
\end{table}
%
Table~\ref{tab:distance_comparison} outlines the main differences between the meta and graph distances.
%
The meta distance is specifically tailored for mixed-variable and hierarchical domains. 
Graph distances can also be useful for such domains.
%
For instance, \cite{Zh2020} proposes a graph distance comparing model architectures via embeddings and representations, which are learned from graph neural networks.
%
%
Architectures are represented as nodes, and they are connected by edges if they are similar.

The present paper does not only compare architectures or structures representing hierarchies, but also compares variables of different types taking values or categories, as in Figure~\ref{fig:intro_ex}.
%
%
This study focuses on variables, and graph distances are outside the scope.

Data fusion techniques integrating multiple data subsets into a single heterogeneous dataset~\cite{ChBoNaWhPa2012, ZhXiXiZh21} is related, but also considered outside the scope.
The modeling framework developed in this manuscript implicitly performs data fusion on data subsets that do not share the same variables.
%
%

%

%
This research is motivated by real-life applications.
%
In deep learning, hyperparameter optimization seeks an optimal configuration of a neural network with respect to its hyperparameters~\cite{FeHu19, hypernomad_paper, WuChZhXiDe19, YaSh20}.
This problem is typically mixed-variable and hierarchical. 
%
In~\cite{LuPi04a}, the optimization of a magnetic resonance device includes a variable that determines the number of magnets, and each additional magnet involves new design variables.
%
%
In~\cite{BuCiDeNaLa2021}, an aircraft engine is designed through a heterogeneous dataset, in which a part of the data includes a fan and another part does not.
More applications from various fields are also covered, including software architecture design~\cite{AlBuGrKoMe2013},
statistical medical research~\cite{HuHuKeTs16}, drug discovery in heterogeneous datasets~\cite{Da09} and, multiple vehicle routing problems~\cite{HoIsKu2015}.

\subsection{Objectives, contributions and organization of the work }
\label{sec:objectives_organization}

The overall objective is to develop an interpretable and constant-time distance function for mixed-variable and hierarchical domains in which two points do not necessarily share the same dimension, bounds or variables.
Similar distances exist in the literature, but have at least one of the following issues: reliance on randomness (non-interpretable), lacking properties of a metric, such as the identity of indiscernibles, or complexity depending on datasets or solving an optimization subproblem.
%

The research gap addressed is the lack of mixed-variable distance functions defined on hierarchical domains without these issues.
The motivation is to use whole datasets with mixed-variable and hierarchical domains, in order to improve generalization of models and to provide data-efficient optimization in the context of poor data availability.
%

%
To achieve the overall objective, two steps, representing the main contributions, are distinguished.
The first formalizes a modeling framework that thoroughly models a hierarchical domain with a graph structure specific to the problem.
This graph encompasses the information related to the hierarchy of variables and the hierarchical interrelationships between variables. 
%
%
For example, in Figure~\ref{fig:intro_ex}, the optimizer has the highest hierarchy since it influences bounds and inclusion of variables, but it is not influenced by other variables.
%
%
%
%
In the literature, variants of hierarchical domains are referred to as tree-structured~\cite{bergstra2011algorithms}, variable-size~\cite{PeBrBaTaGu2021} as well as conditional search space or neural architecture search~\cite{JiXuZh2022}.
The modeling framework developed in this study generalizes and unifies all these variants, as well as the framework in~\cite{HuOs2013}, which introduced the term hierarchical in the context of this work.

The second step constructs a distance function based on the proposed modeling framework.
The distance is defined on the {\em extended domain} $\overline{\mathcal{X}}$, rather than directly on its corresponding domain $\mathcal{X}$.
The extended domain is an extension of the domain that involves all included and excluded variables.
Excluded variables are those that are excluded for a given point $x \in \mathcal{X}$, but present in another point $y \in \mathcal{X}$.
For example, in Figure~\ref{fig:intro_ex}, the number of units in the second layer is excluded, when there is only one layer.
Excluded variables are considered by the distance, since they simultaneously provide valuable information and facilitate the comparison of two points that do not share the same variables.
%

The motivation and the objectives are illustrated in Figure~\ref{fig:intro_big_picture}.
The colors represent datasets.
Each group of four green blocks corresponds to a dataset, where each data point has four variables. 
The blue and red blocks correspond to datasets with three and two variables, respectively.
\begin{figure}[htb!]
\begin{subfigure}[t]{0.55\textwidth}
\centering
  \scalebox{1}{\begin{tikzpicture}

 \color{black}
    \definecolor{myyellow}{RGB}{0,170,0}
    \definecolor{mygreen}{RGB}{0,170,0}
    \definecolor{myblue}{RGB}{0,160,255}
    \definecolor{myred}{RGB}{255,0,0}

    \node[label={[align=center]\small {Hierarchical} \\ \small {domain} $\mathcal{X}$}, ellipse, draw, minimum width=1.75cm, minimum height=2.75cm] (Domain) at (5,0) {};

    \begin{scope}[shift={(4.45,-0.35)}, scale=0.275] 
        \begin{scope}[rotate=10, shift={(0.25,2.9)}]
            \fill[myyellow] (0,0) rectangle (4,1);
            \draw[thick] (0,0) rectangle (4,1);
            \foreach \x in {1,2,3} \draw[thick] (\x,0) -- (\x,1);
        \end{scope}

        \begin{scope}[rotate=8, shift={(1,1.6)}]
            \fill[mygreen] (0,0) rectangle (4,1);
            \draw[thick] (0,0) rectangle (4,1);
            \foreach \x in {1,2,3} \draw[thick] (\x,0) -- (\x,1);
        \end{scope}

        \begin{scope}[shift={(0,-0.2)}]
            \fill[myblue] (0,0) rectangle (3,1);
            \draw[thick] (0,0) rectangle (3,1);
            \foreach \x in {1,2} \draw[thick] (\x,0) -- (\x,1);
        \end{scope}

        \begin{scope}[rotate=-10, shift={(2,-1.5)}]
            \fill[myred] (0,0) rectangle (2,1);
            \draw[thick] (0,0) rectangle (2,1);
            \draw[thick] (1,0) -- (1,1);
        \end{scope}
    \end{scope}
    

    %
    \node[label={[align=center]\small {Partitioned} \\ \small {domain}}, ellipse,  minimum width=1.75cm, minimum height=2.75cm] (Extended) at (9,0.1) {};

    \begin{scope}[shift={(-1, 0)}]


        \begin{scope}[shift={(9.475, -0.55+0.2)}, scale=0.275]
            \fill[myyellow] (0,5) rectangle (4,6);
            \draw[thick] (0,5) rectangle (4,6);
            \foreach \x in {1,2,3} \draw[thick] (\x,5) -- (\x,6);
    
            \fill[mygreen] (0,4) rectangle (4,5);
            \draw[thick] (0,4) rectangle (4,5);
            \foreach \x in {1,2,3} \draw[thick] (\x,4) -- (\x,5);
    
            \begin{scope}[shift={(0.5,0)}]
                \fill[myblue] (0,1) rectangle (3,2);
                \draw[thick] (0,1) rectangle (3,2);
                \foreach \x in {1,2} \draw[thick] (\x,1) -- (\x,2);
            \end{scope}
            
            \begin{scope}[shift={(1,-2)}]
                \fill[myred] (0,0) rectangle (2,1);
                \draw[thick] (0,0) rectangle (2,1);
                \draw[thick] (1,0) -- (1,1);
            \end{scope}

        \end{scope}
          
    \end{scope}

    \begin{scope}[shift={(4.2, -0.25)}, scale=0.45]
         \draw [thick, -{Latex[length=3mm]}] (4.5, 0.75) -- (8, 2.5);  
        \draw [thick, -{Latex[length=3mm]}] (4.5, 0.75) -- (8, 0.75);  
         \draw [thick, -{Latex[length=3mm]}] (4.5, 0.75) -- (8, -1.5);  
    \end{scope}

    \begin{scope}[shift={(0.5, 0)}]
        \draw (8.5, 0.825+0.2) ellipse [x radius=0.9cm, y radius=0.45cm];
    
        \draw (8.5, -0.15+0.2) ellipse [x radius=0.7cm, y radius=0.25cm];

        \draw (8.5, -0.965+0.2) ellipse [x radius=0.5cm, y radius=0.25cm];
    \end{scope}
 

\end{tikzpicture}}
  \subcaption{Motivation: avoid partitioning a hierarchical \\domain and splitting heteregeneous data.}
  \label{subfig:intro_big_picture_motivation}
\end{subfigure}
\hspace{0.25cm}
\begin{subfigure}[t]{0.4\textwidth}
\centering
  \scalebox{1}{\begin{tikzpicture}
 \color{black}
    \definecolor{myyellow}{RGB}{0,170,0}
    \definecolor{mygreen}{RGB}{0,170,0}
    \definecolor{myblue}{RGB}{0,160,255}
    \definecolor{myred}{RGB}{255,0,0}
    \definecolor{mygray}{RGB}{255,255,255}
    \definecolor{mywhite}{RGB}{255,255,255}


    \node[label={[align=center]\small {Hierarchical} \\ \small {domain} $\mathcal{X}$}, ellipse, draw, minimum width=1.75cm, minimum height=2.75cm] (Domain) at (5,0) {};

    \begin{scope}[shift={(4.45,-0.35)}, scale=0.275] 
        \begin{scope}[rotate=10, shift={(0.25,2.9)}]
            \fill[myyellow] (0,0) rectangle (4,1);
            \draw[thick] (0,0) rectangle (4,1);
            \foreach \x in {1,2,3} \draw[thick] (\x,0) -- (\x,1);
        \end{scope}

        \begin{scope}[rotate=8, shift={(1,1.6)}]
            \fill[mygreen] (0,0) rectangle (4,1);
            \draw[thick] (0,0) rectangle (4,1);
            \foreach \x in {1,2,3} \draw[thick] (\x,0) -- (\x,1);
        \end{scope}

        \begin{scope}[shift={(0,-0.2)}]
            \fill[myblue] (0,0) rectangle (3,1);
            \draw[thick] (0,0) rectangle (3,1);
            \foreach \x in {1,2} \draw[thick] (\x,0) -- (\x,1);
        \end{scope}

        \begin{scope}[rotate=-10, shift={(2,-1.5)}]
            \fill[myred] (0,0) rectangle (2,1);
            \draw[thick] (0,0) rectangle (2,1);
            \draw[thick] (1,0) -- (1,1);
        \end{scope}
    \end{scope}


    \begin{scope}[shift={(-1, 0)}]


        \node[label={[align=center]\small {Extended} \\ \small {domain} $\overline{\mathcal{X}}$}, ellipse, draw, minimum width=1.75cm, minimum height=2.75cm] (Extended) at (10,0) {};

        \begin{scope}[shift={(9.475, -0.55)}, scale=0.275]
            \fill[myyellow] (0,3) rectangle (4,4);
            \fill[mygreen] (0,2) rectangle (4,3);
            \fill[myblue] (0,1) rectangle (3,2);
            \fill[mywhite] (3,1) rectangle (4,2);
            \fill[myred] (0,0) rectangle (2,1);
            \fill[mygray] (2,0) rectangle (4,1);
    
            \draw[thick] (0,0) rectangle (4,4);
            
            \draw[thick] (0,3) -- (4,3);
            \draw[thick] (0,2) -- (4,2);
            \draw[thick] (0,1) -- (4,1);
    
            \draw[thick] (1,0) -- (1,4);
            \draw[thick] (2,0) -- (2,4);
            \draw[thick] (3,0) -- (3,4);
        \end{scope}
          
    \end{scope}

    %


    \draw [{Latex[length=3mm]}-{Latex[length=3mm]}] 
    (5.9, 0.155) to [out=0, in=180] 
    node[midway, above] {\small Graph} 
    (7.6+0.5, 0.15);


\end{tikzpicture}}
  \subcaption{Objectives: define a distance on the extended domain, in which data is aggregated.}
  \label{subfig:intro_big_picture_approach}
\end{subfigure}
\caption{\blr Global overview of the work.}
\label{fig:intro_big_picture}
\end{figure}

\noindent Figure~\ref{subfig:intro_big_picture_motivation} illustrates a commonly used approach in which the heteregenous dataset is split into simpler homogeneous datasets, each containing data points with the same variables.
Figure~\ref{subfig:intro_big_picture_approach} schematizes the approach employed in this work.
The extended domain $\overline{\mathcal{X}}$ aggregates datasets, even if they have different variables. 
The relations between the variables and the meta distance are established in the extended domain.
%
%
Finally, a graph structure establishes a direct correspondence between the two domains ${\mathcal{X}}$ and $\overline{{\mathcal{X}}}$.
The rest of the document is organized as follows. 
%
%
A literature review is detailed in Section~\ref{sec:literature_review}.
%
Next, the extended point $\overline{x}$ and the extended domain $\overline{{\mathcal{X}}}$ are thoroughly
defined in Section~\ref{sec:notation}.
Afterwards, in Section~\ref{sec:distance}, the meta distance is defined on the extended domain $\overline{{\mathcal{X}}}$, which induces a distance on the domain $\mathcal{X}$.
%
Finally, computational experiments are carried out in Section~\ref{sec:numerical_exp}.
%

}

{\blr 
\section{Literature review}
\label{sec:literature_review}

This section presents the relevant literature in three parts.
Section~\ref{sec:background} details background work that this study relies on. 
Afterward, Section~\ref{sec:related} discusses related work within the scope of this work.  
Finally, background and related work are compared with this study in Section~\ref{sec:literature_comparison}.

\subsection{Background work}
\label{sec:background}

In this work, the particularity that two points in a hierarchical domain $\mathcal{X}$ do not share the same variables, dimension or bounds is a consequence of the so-called {\em meta} variables introduced in the previous work~\cite{G-2022-11}.
These meta variables determine if other variable(s), called {\em decreed}, are excluded or included in a point of the domain $\mathcal{X}$, and control their bounds.
In addition to their type, each variable is assigned a role, such as meta or decreed, that reflects either how it influences the dimension, structure or bounds of the domain $\mathcal{X}$, or how it is subject to the influence of other variables. 
Variables that neither influence nor are influenced by other variables are assigned the {\em neutral} role, 
and they are always included in a point. 
%
%
The framework in~\cite{G-2022-11}, tackling only a single level of hierarchy, is generalized in the present paper.
Notably, roles are generalized and a graph structure is introduced to tackle an arbitrary (finite) level of hierarchy.

An important reference for this work is the technical report~\cite{HuOs2013}, which proposes a mixed-variable kernel function for hierarchical spaces, each paired with a directed acyclic graph, where the nodes are the variables.
Variables with child nodes are required to be categorical.  
The kernel is constructed from one-dimensional kernels for which pseudodistances take into account whether the variables are included or excluded.
The inclusion of a variable is determined by a Kronecker delta function that takes the values of its ancestors as arguments.
%

\subsection{Related work}
\label{sec:related}

Most of the literature on distance or similarity measures for heterogeneous datasets treats the simpler case where heterogeneity originates solely from the variety of variable types.
In classification, variants of the $K$-nearest neighbors, based on distances (or similarity measures) built with combinations of continuous, integer or categorical distances, are commonly studied~\cite{AhDe07, AlNeTr19_2, PeCaRe10}.
Decision trees or random forests are also used for classification~\cite{SoLu15}, and even regression~\cite{KiHo17}.
In regression, many kernel functions (similarity measures) have been recently developed for constructing Gaussian processes (GPs)~\cite{RaWi06} over heterogeneous datasets with mixed-variables.
%
%
Kernel methods are well adapted for mixed-variable problems, since mixed kernels can be directly constructed with products or additions of well-documented continuous~\cite{RaWi06}, integer~\cite{GaHe2020} or categorical kernels~\cite{PeBrBaTaGu2019, QiWuJe2008, BaDiMoLeSa2023, ZhTaChAp2020}.

In~\cite{GaDuKaSe2018}, a novel similarity measure, called the Earth mover's intersection (EMI), compares sets of different sizes with an Earth mover's distance (EMD).
EMD measures the minimum cost to transform one distribution into another by solving an optimal transport problem. 
Such distances are also known as \textit{Wasserstein} distances~\cite{SoLePeZaKe2023}.

In~\cite{PeBrBaTaGu2021}, GPs are constructed on said variable-design spaces, which contain dimensional variables~\cite{LuPiSc05a} that are essentially discrete meta variables controlling the inclusion or exclusion of other variables.
The variable-wise decomposition kernel handles the hierarchical domain by comparing only the shared variables between two points based on their respective dimensional variables.  

%
%
%

In~\cite{BeSeRu10}, feature models manage and capture heterogeneity across data points through tree-structured models consisting of features nodes and relationships arcs, that represent parent-child dependencies or integrity constraints~\cite{AsGrMoGa16}.
Recent advances in features models addressed complex dependencies and constraints in large-scale heterogeneous datasets~\cite{Ba05} with semantic logic.


\subsection{Comparison of background and related work}
\label{sec:literature_comparison}

The following table compares approaches and frameworks, from background and related work, with similar objectives to the current work.
The first column indicates whether the approach or framework tackles mixed-variable domains.
The second one shows if variables that are not shared between points are considered in distance computations.
The third column verifies if the distance is deterministic, \textit{i.e.}, interpretable.
The last two columns show how the number of parameters and the cost of the distance scales with the number of variables $n$ or the number of data points $N$. 
The other columns are self-explanatory. 
\begin{table}[htb!]
    \centering
    \footnotesize
    \renewcommand{\arraystretch}{1.2} 
    \setlength{\tabcolsep}{2pt} 
    \begin{tabular}{l c c c c c c c}
        \toprule
        \multirow{2}{*}{\makecell{Approach/ \\ framework }} 
        & \multirow{2}{*}{Mixed-var.} 
        & \multirow{2}{*}{\makecell{Comparisons \\ different var.}} 
        & \multirow{2}{*}{\makecell{Deterministic/ \\ interpretable}} 
        & \multirow{2}{*}{\makecell{Layers of \\ hierarchy}} 
        & \multirow{2}{*}{\makecell{Type of \\ proximity}}
        & \multirow{2}{*}{\makecell{\# of \\ param.}} 
        & \multirow{2}{*}{\makecell{Cost of \\ $d(x,y)$}} \\
        \\[-0.1em] \midrule 
        Subproblems & \checkmark & \ding{53}  & \checkmark  & 0 & Distance & 0 & $\mathcal{O}(1)$ \\
        Meta~\cite{G-2022-11} & \checkmark  & \ding{53}   &  \ding{53}   & 1 & None & \ding{53}   & \ding{53}   \\
        Variable-size~\cite{PeBrBaTaGu2021} & \checkmark & \ding{53}    & \checkmark  & 1 & Kernel & $\mathcal{O}(n)$ & $\mathcal{O}(1)$ \\
        Hierarchical~\cite{HuOs2013} & $\sim$  & \checkmark & $\sim$  & $m$ & Kernel & $\mathcal{O}(n)$ & $\mathcal{O}(1)$ \\
        EMI~\cite{GaDuKaSe2018} & \checkmark  & \checkmark  & \checkmark  & $m$ & Distance & $\mathcal{O}(N^2)$ & $\mathcal{O}(N^3 \log N)$ \\
        \hdashline
        Current work & \checkmark & \checkmark & \checkmark & $m$ & Distance & $\mathcal{O}(n)$ & $\mathcal{O}(1)$ \\
        \bottomrule
    \end{tabular}
    \caption{\blr Comparison of frameworks based on key features, where \checkmark signifies true, $\sim$ means partially true, \ding{53} represents false, $n$ is the number of variables and $N$ is the number of points in a dataset.
    }
    \label{tab:comparison_methods}
\end{table}

In the table, ``Subproblems'' refers to the approach that simply partitions hierarchical domains, as in Figure~\ref{subfig:intro_big_picture_motivation}.
This approach addresses mixed-variable domains, but does not handle hierarchical domains.
%

%
%
%

The meta framework from previous work~\cite{G-2022-11} does not provides a distance and only tackles a single layer of hierarchy.
This framework is proposed specifically for unifying specific optimization approaches in context where derivatives are not available.
%

The variable-size framework proposes a mixed-variable kernel that considers only shared variables between points and supports only a single layer of hierarchy.
%
%

The EMI is defined over sets or distributions of varying sizes, inherently tackling arbitrary level of hierarchy noted as $m \in \mathbb{N}$~\cite{GaDuKaSe2018}. 
%
%
%
The drawback is the computational cost of $d(x,y)$, which involves solving an optimal transport problem with a worst-case complexity of $\mathcal{O}(N^3 \log N)$.
This is impractical for machine learning or optimization with distance-based models.

The hierarchical framework models multiple layers of hierarchy.
The variables of higher hierarchy influencing other variables are restricted to the categorical type.
%
%
The kernel function relies on pseudo-distances without the identity of indiscernibles, \textit{i.e.}, $d(x,y) = 0 \not \Rightarrow x=y$.
This is problematic, particularly in optimization, as distinct points with a distance of zero can lead to convergence issues or unreliable exploration.
In supervised learning, this may cause different inputs to be treated as identical, resulting in poorer generalization and potential numerical instability.

The framework of the current work fully supports mixed-variable domains, considers all variables between two points, even if not shared, and handles an arbitrary level of hierarchies. 
It generalizes the previous work~\cite{G-2022-11} in multiple aspects, as presented in Table~\ref{tab:comparison_methods}, and satisfies the overall objective introduced in Section~\ref{sec:objectives_organization}.

}

\section{Modeling framework for hierarchical domains based on meta variables}
\label{sec:notation}

In this section, hierarchical domains that generate heterogeneous datasets are formalized.  
In Section~\ref{sec:notation_roles}, the roles of variables are explicitly introduced.
Then, excluded variables and extended point, containing all variables whether they are excluded or not, are defined in Section~\ref{sec:excluded_variables_and_extended_point}.
In Section~\ref{sec:role_graph}, notions of graph theory are adapted for this work.
Afterwards, the restricted sets, in which variables of the extended point belong, are detailed in Section~\ref{sec:admissible_set}.
Subsequently, the extended domain $\overline{\mathcal{X}}$ is introduced in Section~\ref{sec:extended_domain}.

\subsection{Roles of variables}
\label{sec:notation_roles}

The roles of variables are established from the the decree property that is generalized from~\cite{G-2022-11}.
The following definition allows a variable to simultaneously have the decree property, and have its inclusion or admissible values determined by a decree dependency (by other variables with the decree property):~this is not allowed in~\cite{G-2022-11}.

\begin{mydef}[Decree property and decree dependency]
The decree property is attributed to variables whose values determine if other variables are included or excluded from a point $x \in \mathcal{X}$, or whose values determine the admissible values (or bounds) of other variables.
\medskip

\noindent A decree dependency refers to the inclusion or admissible values dependency of a variable with respect to an another variable with the decree property.
%
Variables can have multiple decree dependencies with different variables.

\label{def:decree_property_meta_variables}
\end{mydef}

In Section~\ref{sec:role_graph}, decree dependencies are viewed as parent-children dependencies, where the values of a parent variable determines the inclusion or admissible values of its children variables.
{\blr
In Figure~\ref{fig:intro_ex}, the optimizer $o$ has the decree property, since it determines the inclusion of, among others, the update $\alpha$, which the update $\alpha$ is included when $o=\texttt{ASGD}$ and excluded otherwise, hence it has has a decree dependency with the optimizer $o$ (parent).
}
%

%
%

%
Definition~\ref{def:decree_property_meta_variables} on the decree property and decree dependency establishes four possible cases, which are formalized as the roles of variables in the following definition.
\begin{mydef}[Roles of variables]
The role of a variable represents its relation to the decree property.
A variable is assigned one of the following roles:
\begin{enumerate}[leftmargin=*,labelindent=16pt]

 %
    \item meta $(\meta)$, if it has the decree property, and has no decree dependency;
    
%
    \item meta-decreed $(\metadec)$, if it has the decree property, and has at least one decree dependency;
%
    \item decreed $(\decreed)$, if it does not have the decree property, but has at least one decree dependency;
%
    \item neutral $(\neutral)$, if it does not have the decree property nor decree dependency.  
    
\end{enumerate}
\label{def:roles_of_variables}
\end{mydef}


Recall that the role of a variable must not be confused with its variable type. 
Each variable has its own variable type and is assigned its own role.
{\blr
Figure~\ref{fig:framework_MLP} further details the MLP example by adding roles and schematizing decree dependencies.
}
%
%
\begin{figure}[htb!]
\begin{subfigure}[t]{0.6\textwidth}
    \centering
    \scalebox{0.725}{\begin{tikzpicture}[shift={(0, -2)}]
    \node [shape=rectangle, align=center](table1) at (0,0) {
        \begin{tabular}{lccc} 
        \toprule
            HP  & Bounds & Type  & Role \\ \midrule
            Learning rate $(r)$ & $]0,1[$ & Cont & Neutral \\ 
            Optimizer $(o)$ & $\{ \texttt{ASGD}, \texttt{ADAM} \}$ & Cat & Meta \\
            \bottomrule
        \end{tabular}};

    \node [shape=rectangle, align=center, xshift=-4.4cm, yshift=-3.25cm, at=(table1)](table2){
        \begin{tabular}{lccc} 
        \toprule
            HP  & Bounds  & Type & Role \\ \midrule
            Update $(\alpha)$  & $]0,1[$ & Cont & Decreed \\
            \#  layers $(l)$  & $\{0,1\}$ & Int & Meta-dec. \\ 
              \# units  $(u_i)$  & $U_{\texttt{ASGD}}$ & Int & Decreed \\
            \bottomrule
        \end{tabular}};

    \node [shape=rectangle, align=center, xshift=0.5cm, yshift=-6.25cm, at=(table1)] (table3)  {
        \begin{tabular}{lccc} 
        \toprule
            HP  & Bounds  & Type & Role \\ \midrule
            Average $(\beta)$   & $]0,1[$ & Cont & Decreed \\
            \#  layers $(l)$    & $\{0,1,2\}$ & Int & Meta-dec. \\ 
             \# units  $(u_i)$  & $U_{\texttt{ADAM}}$ & Int & Decreed \\
            \bottomrule
        \end{tabular}};
   
    \draw[->] (table1)--(table2) node[midway, anchor=west, above, xshift=-2cm, yshift=-0.15cm] {if $o=\texttt{ASGD}$};
    \draw[->] ($(table1.south) + (5mm,0mm)$)--(table3) node[midway, anchor=east, above, xshift=2cm, yshift=-0.15cm] {if $o=\texttt{ADAM}$};
\end{tikzpicture}}
    \caption{Types and roles of the hyperparameters.}
    \label{subfig:framework_ex_reprise_for_roles}
\end{subfigure}
\hspace{0.1cm}
\begin{subfigure}[t]{0.4\textwidth}
    \centering
    \raisebox{2cm}{
        \scalebox{0.75}{ {\begin{tikzpicture}
\newcommand{\ellipseWidth}{16mm}  
\newcommand{\ellipseHeight}{14mm} 

\begin{scope}[shift={(0, -2)}]

    \node[draw, shape=ellipse, minimum width=\ellipseWidth, minimum height=\ellipseHeight] at (4,10) (o) {\large $o$}; 
    

    \node[draw, shape=ellipse, minimum width=\ellipseWidth, minimum height=\ellipseHeight, xshift=0cm,  yshift=-2cm, at=(o)] (l) {\large $l$}; 

    \draw[dash pattern=on 2pt off 2pt, line width=1pt, ->] (o)--(l);
    

    \node[draw, shape=ellipse, minimum width=\ellipseWidth, minimum height=\ellipseHeight, xshift=-3.25cm,  yshift=-2.5cm+1cm, at=(l)] (a) {\large $\alpha$};   
    \node[draw, shape=ellipse, minimum width=\ellipseWidth, minimum height=\ellipseHeight, xshift=-1.35cm,  yshift=-2.5cm+1cm, at=(l)] (u1) {\large $u_1$};   
    \node[draw, shape=ellipse, minimum width=\ellipseWidth, minimum height=\ellipseHeight, xshift=1.35cm,  yshift=-2.5cm+1cm, at=(l)] (u2) {\large $u_2$};   
    \node[draw, shape=ellipse, minimum width=\ellipseWidth, minimum height=\ellipseHeight, xshift=3.25cm,  yshift=-2.5cm+1cm, at=(l)] (b) {\large $\beta$};
    
    \draw[->, line width=1pt] (o)--(a);
    \draw[->, line width=1pt] (l)--(u1);
    \draw[->, line width=1pt] (l)--(u2);
    \draw[->, line width=1pt] (o)--(b);

    \draw[dash pattern=on 2pt off 2pt, line width=1pt, ->] (o)--(u1.north);
    \draw[dash pattern=on 2pt off 2pt, line width=1pt, ->] (o)--(u2.north);


    \node[draw, shape=ellipse, minimum width=\ellipseWidth, minimum height=\ellipseHeight, xshift=-3cm,  yshift=0cm, at=(o)] (r) {\large $r$};  
    
\end{scope}

\end{tikzpicture}}
        }
    }
    \caption{Decree dependencies as arcs, solid for inclusion and dotted for values.}
    \label{subfig:framework_decree_dependencies}
\end{subfigure}
\caption{\blr Roles of the variables and decree dependencies in the MLP example.}
\label{fig:framework_MLP}
\end{figure}

{\blr
%
Figure~\ref{subfig:framework_decree_dependencies} illustrates the decree dependencies between hyperparameters of the MLP example, using solid lines for inclusion dependencies and dotted lines for admissible values dependencies.}
%
In the MLP example, the optimizer $o \in \{ \texttt{ADAM}, \texttt{ASGD} \}$ is a categorical variable that is assigned the meta role, since it determines the inclusion and the admissible values of other variables (decree propriety), and neither its inclusion nor its admissible values are determined by other variables (no decree dependency).
For convenience, a variable and its role are referred similarly as a variable and its type, \textit{e.g.}, the optimizer is referred as a meta categorical variable. 
%
{\blr The number of layers $l \in L_o$ is a meta-decreed integer variable, since its admissible values are determined by the optimizer $o$ (decree dependency), and it determines the inclusion of the number of units $u_1, u_2 \in U_o$, where
%
%

\begin{center}
\begin{minipage}{.4\linewidth}
    \begin{align*}
        L_o \coloneq
        \begin{cases}
        \{0,1\}     &\text{ if } o=\texttt{ASGD}, \\
        \{0,1, 2\}  &\text{ if } o=\texttt{ADAM}, \\
        \end{cases}
    \end{align*}
\end{minipage}%
\quad \text{ and } \quad
\begin{minipage}{.4\linewidth}
     \begin{align*}
        U_o \coloneq
        \begin{cases}
        U_{\texttt{ASGD}}     &\text{ if } o=\texttt{ASGD}, \\
        U_{\texttt{ADAM}}  &\text{ if } o=\texttt{ADAM}. \\
        \end{cases}
    \end{align*}
\end{minipage}\\~
\end{center}%
}

\noindent The update $\alpha$ is a decreed continuous variable, since it does not have the decree property, and its inclusion is determined by the optimizer $o$.
%
Finally, the learning rate $r \in \, ]0,1[$ is a neutral categorical variable, as it does not have the decree property, and it has no decree dependency.

\subsection{Excluded variables and extended point}
\label{sec:excluded_variables_and_extended_point}

In previous work~\cite{G-2022-11}, a point contains only variables that are included for the given values of variables with the decree property.
In this work, variables that are excluded are also considered, since 1) it provides useful information for computing distances between two points of the domain $\mathcal{X}$ that do not share the same variables, and 2) it facilitates the computations themselves.
This last remark leads to the following definition. 

\begin{mydef}[Excluded variable]
    An excluded variable is a meta-decreed or decreed variable, that, for the given values of the variables associated to its decree dependencies, is not included in the given point $x \in \mathcal{X}$, but is included in at least one other point $y \in \mathcal{X}$.
An excluded variable is assigned the special value {\normalfont $\texttt{EXC}$} and its variable type is conserved.
\label{def:excluded_variable}
\end{mydef}

%
In the MLP example, the update $\alpha$ is a decreed continuous variable, which is excluded when the optimizer $o=\texttt{ADAM}$, whereas $\alpha \in \ ]0,1[$ (included) when $o=\texttt{ASGD}$.
%
Definition~\ref{def:excluded_variable} is introduced to allow the meta distance to consider every variable that is included in at least one point of the domain, collectively referred as all the included and excluded variables.
%
%
The definition of an extended point formalizes the previous sentence.

\begin{mydef}[An extended point]
An extended point $\overline{x}$ contains all the included and excluded variables of a corresponding point $x \in \mathcal{X}$. For $r \in R := \{\meta, \metadec, \decreed, \neutral \}$ and $i \in I^r := \{1,2, \ldots, n^r \}$, the $i$-th variable assigned to the role $r$ is noted $\overline{x}_i^r$, where $n^r \in \mathbb{N}$ is the number of variables assigned to the role $r$.

%
%

\label{def:extended_point_and_components}
\end{mydef}

{\blr
The bar notation of an extended point $\overline{x}$ outlines that a corresponding point $x$ is conceptually extended to incorporate excluded variables.
This notation is inspired by the extended set of real numbers $\overline{\mathbb{R}}= \mathbb{R} \cup \pm \{\infty \}$.
}
%
%
A point $x \in \mathcal{X}$ contains only included variables, but its variables can be attributed roles, similarly as an extended point $\overline{x}$.
%

Second, meta and neutral variables are always included, hence there is no distinction between these variables whether they are part of an extended point $\overline{x}$ or of a point $x$, \textit{i.e.}, $\overline{x}_i^r = x_i^r$ for $r \in \{\meta, \neutral\}$ and $i \in I^r$.
The admissible values of meta and neutral variables are also fixed.
In contrast, meta-decreed $\overline{x}_i^{\metadec}$ and decreed $\overline{x}_j^{\decreed}$ variables of an extended point $\overline{x}$ may be excluded from a point $x$, and/or their admissible values may differ between points.
%

%

Third, the framework proposed in~\cite{G-2022-11} is a special case of the one developed in this work, as it includes meta variables but lacks meta-decreed variables.
The special case can be recovered easily by removing the meta-decreed variables: this is convenient as the special case provides a specialized framework for problems of great interests, such as most hyperparameter optimization problems.

Fourth, meta-decreed and decreed variables can be seen as bounds-dependent variables and, in that case, the bounds of an excluded variable are restricted to the empty set.

Fifth, both the roles and types of a variables carry important information for properly computing distances between variables in Section~\ref{sec:distance}.
%
%
%
To avoid a cumbersome notation, variable types are not explicit, but they are implicitly considered in the computation of distances in Section~\ref{sec:distance}.

{\blr 
The following figures schematize two extended points of the MLP example.
Each figure builds upon Figure~\ref{subfig:framework_decree_dependencies} by assigning values to the variables.
}
\begin{figure}[htb!]
\begin{subfigure}[t]{0.5\textwidth}
    \centering
    \scalebox{0.675}{\begin{tikzpicture}
\newcommand{\ellipseWidth}{16mm}  
\newcommand{\ellipseHeight}{14mm} 

    \node[draw, shape=ellipse, minimum width=\ellipseWidth, minimum height=\ellipseHeight] at (4,10) (o) {\large $\overline{o} = \texttt{ADAM}$}; 
    

    \begin{scope}[shift={(0, 0)}]
        \node[draw, shape=ellipse, minimum width=\ellipseWidth, minimum height=\ellipseHeight, xshift=0cm,  yshift=-2cm, at=(o)] (l) {\large $\overline{l} = 1$}; 
    
        \draw[dash pattern=on 2pt off 2pt, line width=1pt, ->] (o)--(l);
        
    
        \node[draw, shape=ellipse, minimum width=\ellipseWidth, minimum height=\ellipseHeight, xshift=-4.75cm,  yshift=-2.5cm+1cm, at=(l)] (a) {\large $\overline{\alpha} = \texttt{EXC}$};   
        \node[draw, shape=ellipse, minimum width=\ellipseWidth, minimum height=\ellipseHeight, xshift=-1.66cm,  yshift=-2.5cm+1cm, at=(l)] (u1) {\large $\overline{u}_1 = 100$};   
        \node[draw, shape=ellipse, minimum width=\ellipseWidth, minimum height=\ellipseHeight, xshift=1.66cm,  yshift=-2.5cm+1cm, at=(l)] (u2) {\large $\overline{u}_2 = \texttt{EXC}$};   
        \node[draw, shape=ellipse, minimum width=\ellipseWidth, minimum height=\ellipseHeight, xshift=4.75cm,  yshift=-2.5cm+1cm, at=(l)] (b) {\large $\overline{\beta} = 0.5$};

        \draw[dash pattern=on 2pt off 2pt, line width=1pt, ->] (o)--(u1.north);
        \draw[dash pattern=on 2pt off 2pt, line width=1pt, ->] (o)--(u2.north);
    
    
        \node[draw, shape=ellipse, minimum width=\ellipseWidth, minimum height=\ellipseHeight, xshift=-4cm,  yshift=0cm, at=(o)] (r) {\large $\overline{r} = 0.1$};  

    \end{scope}

    \draw[->, line width=1pt] (o)--(a);
    \draw[->, line width=1pt] (l)--(u1);
    \draw[->, line width=1pt] (l)--(u2);
    \draw[->, line width=1pt] (o)--(b);

\end{tikzpicture}}
    \caption{Extended point $\overline{x}$ with \text{ADAM} and one layer.}
    \label{subfig:extended_point_ADAM}
\end{subfigure}
\hspace{0.1cm}
\begin{subfigure}[t]{0.5\textwidth}
    \centering
    \scalebox{0.675}{\begin{tikzpicture}
\newcommand{\ellipseWidth}{16mm}  
\newcommand{\ellipseHeight}{14mm} 

    \node[draw, shape=ellipse, minimum width=\ellipseWidth, minimum height=\ellipseHeight] at (4,10) (o) {\large $\overline{o}' = \texttt{ASGD}$}; 
    

    \node[draw, shape=ellipse, minimum width=\ellipseWidth, minimum height=\ellipseHeight, xshift=0cm,  yshift=-2cm, at=(o)] (l) {\large $\overline{l}' = 0$}; 

    \draw[dash pattern=on 2pt off 2pt, line width=1pt, ->] (o)--(l);
    

    \node[draw, shape=ellipse, minimum width=\ellipseWidth, minimum height=\ellipseHeight, xshift=-4.75cm,  yshift=-2.5cm+1cm, at=(l)] (a) {\large $\overline{\alpha}' = 0.3$};   
    \node[draw, shape=ellipse, minimum width=\ellipseWidth, minimum height=\ellipseHeight, xshift=-1.66cm,  yshift=-2.5cm+1cm, at=(l)] (u1) {\large $\overline{u}_1' = \texttt{EXC}$};   
    \node[draw, shape=ellipse, minimum width=\ellipseWidth, minimum height=\ellipseHeight, xshift=1.66cm,  yshift=-2.5cm+1cm, at=(l)] (u2) {\large $\overline{u}_2' = \texttt{EXC}$};   
    \node[draw, shape=ellipse, minimum width=\ellipseWidth, minimum height=\ellipseHeight, xshift=4.75cm,  yshift=-2.5cm+1cm, at=(l)] (b) {\large $\overline{\beta}' = \texttt{EXC}$};
    
    \draw[->, line width=1pt] (o)--(a);
    \draw[->, line width=1pt] (l)--(u1);
    \draw[->, line width=1pt] (l)--(u2);
    \draw[->, line width=1pt] (o)--(b);

    \draw[dash pattern=on 2pt off 2pt, line width=1pt, ->] (o)--(u1.north);
    \draw[dash pattern=on 2pt off 2pt, line width=1pt, ->] (o)--(u2.north);


    \node[draw, shape=ellipse, minimum width=\ellipseWidth, minimum height=\ellipseHeight, xshift=-4cm,  yshift=0cm, at=(o)] (r) {\large $\overline{r} = 0.01$};  

\end{tikzpicture}}
    \caption{Extended point $\overline{x}'$ with \text{ASGD} and no layer.}
    \label{subfig:extended_point_ASGD}
\end{subfigure}
\caption{\blr Two different extended points in the MLP example.}
\label{fig:extended_points}
\end{figure}

{\blr
In Figure~\ref{fig:extended_points}, the two extended points contain all included and excluded variables of the domain, facilitating variable comparison.
For instance, the update $\overline{\alpha}$ remains comparable, while being excluded in the extended point $\overline{x}$ on the left.
Both extended points share the same arc representations.
In the next section, elements of graph theory formalize a common graph structure shared by all extended points of a problem.}


\subsection{Notions from graph theory}
\label{sec:role_graph}
 
At this stage of the work, Definition~\ref{def:decree_property_meta_variables} allows meta-decreed variables to impact each others through decree dependencies.
Consequently, decree dependencies between meta-decreed variables can lead to circular reasoning or contradiction.
Indeed, this can be shown with a simple example with only two meta-decreed binary variables: 
\begin{center}
\begin{minipage}{.4\linewidth}
\begin{equation*}
  x_1^{\metadec} =
  \begin{cases}
      0 \quad \text{ if } x_2^{\metadec}=1, \\
      1 \quad \text{ if } x_2^{\metadec}=0, \\ 
  \end{cases}
\end{equation*}
\end{minipage}%
\begin{minipage}{.4\linewidth}
\begin{equation*}
  x_2^{\metadec} =
  \begin{cases}
      0 \quad \text{ if } x_1^{\metadec}=0, \\
      1 \quad \text{ if } x_1^{\metadec}=1, \\ 
  \end{cases}
\end{equation*}
\end{minipage}\\~
\end{center}%

\noindent where $x_1^{\metadec}=0 \Rightarrow x_2^{\metadec}=0 \Rightarrow x_1^{\metadec}=1 \neq 0$ (contradiction).
%
To avoid these problematic, and uncommon situations, an assumption about the decree dependencies must be introduced. 
Beforehand, the role graph, which among other things allows to formulate the assumption, is defined below.

\begin{mydef}[Role graph]
The role graph $G = \left(V, A \right)$ is a graph structure, where
\begin{itemize}[leftmargin=*,labelindent=16pt]

    \item $V$ is the set of variables that contains all the included and excluded variables, represented as nodes,
    
    \item $A$ is the set of decree dependencies that contains references for all inclusion-exclusion and admissible values dependencies
    between all the included and excluded variables, represented as arcs. 
    
\end{itemize}

\noindent An arc $a \in A$, which refers to a decree dependency, connects a parent (variable) to a child (variable), whose inclusion or admissible values are influenced by the parent.
A parent is either a meta or meta-decreed $\overline{x}^{r}_i$, and a child is either meta-decreed or decreed, such that $a=(\overline{x}^{r}_i, \overline{x}^{r'}_j)$, where $r \in \{\meta, \metadec \}$ and $r' \in \{ \metadec, \decreed \}$, with $i \neq j$ when $r=r'=\metadec$.
A child can have multiple parents, and vice versa.


\label{def:role_graph}
\end{mydef}

{\blr 
In the MLP example, the set of nodes correspond to all included and excluded variables, \textit{i.e.}, 
$V= \left\{ \overline{o}, \overline{l}, \overline{\alpha}, \overline{\beta}, \overline{u}_1, \overline{u}_2, \overline{r} \right\}$.
%
The arcs of the graph represent pairs of parent-child variables, in which a child has a decree dependency with its parent.
From Figure~\ref{fig:extended_points}, the set of decree dependencies in the example is 
$A= \left\{ \left(\overline{o}, \overline{l} \right), \left(\overline{o}, \overline{\alpha} \right), \left(\overline{o}, \overline{\beta} \right), \left(\overline{o}, \overline{u}_1 \right), \left(\overline{o}, \overline{u}_2 \right), \left(\overline{l}, \overline{u}_1 \right), \left(\overline{l}, \overline{u}_2 \right)  \right\}$.
%
The influence of parents on the inclusion and/or admissible values of their children is formalized with these arcs in the next section.
}

Now that the role graph $G$ is properly defined, the assumption discarding situations with circular reasoning or contradiction is introduced.
 
\begin{assumption}
The role graph $G$ is a Directed Acyclic Graph (DAG).

\label{hyp:DAG}
\end{assumption}

Assumption~\ref{hyp:DAG} ensures that the nodes in the role graph $G$ are (partially) ordered.
Circular decree dependencies, as presented in the example with two meta-decreed variables, are forbidden. 
To determine such a partial order, it suffices to apply a topological sort on $G$.
%

Under Assumption~\ref{hyp:DAG}, the role graph $G$ is a data structure that: 1) contains all the included and excluded variables in the set of variables $V$, 2) contains references for all inclusion-exclusion or admissible values dependencies in the set of decree dependencies $A$, and 3) establishes the roles of variables according to the positions of nodes in the DAG:
\begin{itemize}
\itemsep0em 

    \item a meta variable $\overline{x}_i^{\meta}$ is a root node;

    \item a meta-decreed variable $\overline{x}_i^{\metadec}$ is an internal node with at least one parent and one child;

    \item a decreed variable $\overline{x}_i^{\decreed}$ is a leaf node with at least one parent and no child;

    \item a neutral variable $\overline{x}_i^{\neutral}$ is an isolated node.
\end{itemize}

{\blr Figure~\ref{fig:graph_ex} schematizes the positions of variables based on their role.}
\begin{figure}[htb!]
\centering
    \scalebox{0.7}{\begin{tikzpicture}[
    node distance=2.5cm and 2.8cm, 
    every node/.style={draw, circle, minimum size=12mm, font=\large, align=center} 
]

    \node[draw, rectangle, minimum width=2.5cm, minimum height=1cm, anchor=east] (meta) at (-2,6) {Meta variables};
    \begin{scope}[shift={(-2, 0)}]
        \node (b) at (3,6) {$\overline{x}_1^{\meta}$};
        \node (c) at (6,6) {$\overline{x}_2^{\meta}$};
        \node (d) at (9,6) {$\overline{x}_3^{\meta}$};
    \end{scope}

    \begin{scope}[shift={(-2, -0.4)}]
        \node (f) at (4.5,4.5) {$\overline{x}_1^{\metadec}$};  

        \node (h) at (3,3) {$\overline{x}_2^{\metadec}$};
        \node (i) at (6,3) {$\overline{x}_3^{\metadec}$};
      
    \end{scope}

    \begin{scope}[shift={(0, -0.4)}]
        \draw[dashed, thick, rounded corners] (-0.75, 5.2) rectangle (9.75, 2.3);
        %
        \draw [decorate,decoration={brace,amplitude=7pt,mirror}, line width=0.8pt] (-1.25, 5) -- (-1.25, 2.5);

        \node[draw, rectangle, minimum width=2.5cm, minimum height=1cm, anchor=east] (meta_dec) at (-2, 3.75) {Meta-decreed variables};

    \end{scope}


    \begin{scope}[shift={(-2, -1)}]
        %
        \node (k) at (4.5,1.5) {$\overline{x}_1^{\decreed}$};
        \node (m) at (9,1.5) {$\overline{x}_2^{\decreed}$}; 
    \end{scope}

    \node[draw, rectangle, minimum width=2.5cm, minimum height=1cm, anchor=east] (dec) at (-2, 0.5) {Decreed variables};

    \begin{scope}[shift={(0.5, -1.25)}]
        \node[draw, rectangle, minimum width=2.5cm, minimum height=1cm, anchor=east] (neutral) at (-2.4,0) {Neutral variables};
        \node (n) at (2-1.5,0) {$\overline{x}_1^{\neutral}$};
        \node (l) at (6.5,0) {$\overline{x}_3^{\neutral}$};
        \node (new) at (4.25-0.5,0) {$\overline{x}_2^{\neutral}$};
    \end{scope}

    \draw[->, dash pattern=on 2pt off 2pt, line width=1pt] (b) -- (h);
    
    \draw[->, line width=1pt] (b) -- (f);

    \draw[->, line width=1pt] (c) -- (f);
    \draw[->, line width=1pt] (d) -- (m);
    
    \draw[->, line width=1pt] (f) -- (h);
    \draw[->, line width=1pt] (f) -- (i);
    
    \draw[->, dash pattern=on 2pt off 2pt, line width=1pt] (h) -- (k);
    \draw[->, dash pattern=on 2pt off 2pt, line width=1pt] (i) -- (k);

    \draw[dashed, thick, rounded corners] (-0.75, 6.7) rectangle (9.75, 5.3);

    \begin{scope}[shift={(0, -0.8)}]
        \draw[dashed, thick, rounded corners] (-0.75, 2) rectangle (9.75, 0.6);
    \end{scope}

    \begin{scope}[shift={(0, -1.05)}]
        \draw[dashed, thick, rounded corners] (-0.75, 0.5) rectangle (9.75, -0.9);
    \end{scope}

\end{tikzpicture}}
\caption{\blr Example of a role graph showing positions of variables based on their role. 
}
\label{fig:graph_ex}
\end{figure}
{\blr
In this simple example, there are three levels of hierarchy, as the longest path contains three edges.
The graph is acyclic, ensuring no circular dependencies are present.
The example contains two levels of meta-decreed variables.
In general, there can be an arbitrary number of levels.
Disconnected trees can exist, meaning not all variables must be part of a single connected structure.
Finally, decree dependencies (arcs) do not need to follow a strict layer-by-layer structure.
For instance, $x_1^{\meta}$ has arcs connecting to both $x_1^{\metadec}$ and $x_2^{\metadec}$, which are on different levels.}

%

%
Recall that meta-decreed and decreed variables are child variables whose inclusions or admissible values are determined by the values of {\blr their parents composing their decree dependencies.}
The role graph $G$ outlines that the inclusion and/or admissible values of a variable can be determined by multiple different decree dependencies, that is, from multiple parents.
Hence, to determine the inclusion and/or the admissible values of a variable, it is necessary to consider simultaneously all the values of its parents.
In the MLP example, the number of units $\overline{u}_1$ must consider simultaneously the values of its parents the optimizer $\overline{o}$, for its admissible values, and the number of layers $\overline{l}$, for its inclusion.
The following definition introduces formally the notion of the parents in our context.

\begin{mydef}[Parents]
    For $r \in R$ and $i \in I^r$, the parents $\overline{\parents}_i^r$ of the variable $\overline{x}_i^r$ is the subset of variables for which there exists an arc from those variables to $\overline{x}_i^r$, \textit{i.e.},
    \begin{equation}
        \overline{\parents}_i^r := \left\{ \overline{v} \in V \ : \ \left(\overline{v}, \overline{x}_i^r  \right) \in A \right\}. 
    \end{equation}    
    
    \label{def:parents} 
\end{mydef}

The parents are used to handle the inclusion-exclusion or the admissible values of meta-decreed and decreed variables in the
next section.
%
Note that although meta and neutral variables have no parents, they are still defined (as empty sets) for these roles.
This is allows to propose a concise expression for the extended domain $\mathcal{X}$ in Section~\ref{sec:extended_domain}.

For a given variable, the inclusion or admissible values of its parents can be determined by their own parents (\textit{i.e.}, grandparents).
%
In next section, the ancestors of a variable, representing possible multiple generations of parents and grandparents, will be used to determine all the values that such variable can take across all possible extended points.
%
{\blr
The next example removes some hyperparameters and adds the dropout $\rho$ to the MLP example.
This variant justifies the need to consider ancestors when determining all the possible values.
}
\begin{figure}[htb!]
\begin{subfigure}[t]{0.65\textwidth}
    \centering
    \scalebox{0.7}{\begin{tikzpicture}
    \node [shape=rectangle, align=center](table1) at (0,0) {
        \begin{tabular}{lccc} 
        \toprule
            HP  & Bounds  & Role \\ \midrule
            Optimizer $(o)$ & $\{ \texttt{ASGD}, \texttt{ADAM} \}$  & Meta \\
            \bottomrule
        \end{tabular}};

    \node [shape=rectangle, align=center, xshift=-3.75cm, yshift=-2.75cm, at=(table1)](table2){
        \begin{tabular}{llccc} 
        \toprule
            \multicolumn{2}{l}{HP}  & Bounds   & Role \\ \midrule
            \multicolumn{2}{l}{\#  layers $(l)$}  & $\{0,1\}$  & Meta-dec. \\ 
             &  \# units  $(u_i)$  & $U_{\texttt{ASGD}}$  & Decreed \\
            \bottomrule
        \end{tabular}};

    \node [shape=rectangle, align=center, xshift=3.75cm, yshift=-2.75cm, at=(table1)] (table3)  {
        \begin{tabular}{llccc} 
        \toprule
            \multicolumn{2}{l}{HP}  & Bounds   & Role \\ \midrule
            \multicolumn{2}{l}{\#  layers $(l)$}    & $\{0,1,2\}$  & Meta-dec. \\ 
             &  \# units  $(u_i)$  & $U_{\texttt{ADAM}}$  & Decreed \\
            \bottomrule
        \end{tabular}};
   
    \draw[->] (table1)--(table2) node[midway, anchor=west, above, xshift=-2cm, yshift=-0.15cm] {if $o=\texttt{ASGD}$};
    %
    
    \draw[->] (table1)--(table3) node[midway, anchor=east, above, xshift=2cm, yshift=-0.15cm] {if $o=\texttt{ADAM}$};

    \node [shape=rectangle, align=center, xshift=0cm, yshift=-5.5cm, at=(table1)](table5){
        \begin{tabular}{lcc} 
        \toprule
            HP & Bounds & Role   \\ \midrule
            Dropout ($\rho$) & $\left[    
           0, \quad  \frac{1}{2\tau_{\text{max}}} \sum_{i=1}^{l} u_i 
            \right]$  & Decreed
            \\
            \bottomrule
        \end{tabular}};

       \draw[->] (table2)--(table5) node[midway, anchor=east, above, xshift=1.85cm, yshift=-0.25cm] {};

        \draw[->] (table3)--(table5) node[midway, anchor=west, above, xshift=-1cm] {};
    
\end{tikzpicture}}
    \caption{Bounds and role for the variant.}
    \label{subfig:dropout_bounds}
\end{subfigure}
\hspace{0.1cm}
\begin{subfigure}[t]{0.35\textwidth}
    \centering
    \scalebox{0.65}{\begin{tikzpicture}

\newcommand{\circleSize}{12mm}

    %
    \node[draw, shape=circle, minimum size=\circleSize] at (4,10) (o) {\Large $\overline{o}$}; 

    %
    \node[draw, shape=circle, minimum size=\circleSize, xshift=0cm,  yshift=-2.25cm, at=(o)] (l) {\large $\overline{l}$}; 
    %
    
    \draw[dash pattern=on 2pt off 2pt, line width=1pt, ->] (o)--(l);

    %
    %
    \node[draw, shape=circle, minimum size=\circleSize, xshift=-2cm,  yshift=-2.25cm, at=(l)] (u1) {\large $\overline{u}_1$};   
    \node[draw, shape=circle, minimum size=\circleSize, xshift=2cm,  yshift=-2.25cm, at=(l)] (u2) {\large $\overline{u}_2$};   
    %
    
    \draw[->, line width=1pt] (l)--(u1);
    \draw[->, line width=1pt] (l)--(u2);

    \draw[dash pattern=on 2pt off 2pt, line width=1pt, ->] (o)--(u1);
    \draw[dash pattern=on 2pt off 2pt, line width=1pt, ->] (o)--(u2);

    %
    \node[draw, shape=circle, minimum size=\circleSize, xshift=0cm,  yshift=-6.5cm, at=(o)] (p) {\Large $\overline{\rho}$}; 

    \draw[dash pattern=on 2pt off 2pt, line width=1pt, ->] (u1)--(p);
    \draw[dash pattern=on 2pt off 2pt, line width=1pt, ->] (u2)--(p);

    \draw [decorate,decoration={brace,amplitude=7pt,mirror}, line width=0.8pt] (1.4, 10.75) -- (1.4, 4.75);
    \node[align=center] at (0.45, 7.8) {\large Anc.};

    \node[align=left] at (6.5, 10) {\large Grandparent};

    \draw [decorate,decoration={brace,amplitude=7pt}, line width=0.8pt] (6.6, 8.5) -- (6.6, 4.75);
    \node[align=left] at (7.6, 6.6) {\large Par.};

\end{tikzpicture}}
    \caption{Dropout and its ancestors.}
    \label{subfig:ancestors_dropout}
\end{subfigure}
\caption{\blr A variant of the MLP example with the dropout.}
\label{fig:ancestors_dropout}
\end{figure}

{\blr
In the MLP variant, the admissible values (the bounds) of the dropout $\rho$ are determined by the values of its parents, that is,  the number of layers $l$ and the number of units $u_i$ with $0 < i \leq l$.
The constant $\tau_{\text{max}} \coloneq \max \left\{ \sum_{i=1}^{l} u_i \ : \ o \in \{\texttt{ASGD}, \texttt{ADAM} \}, \ l \in L_o, \  u_i \in U_o \text{ for } 1 \leq i \leq l   \right\}$ implies that
%
%
the optimizer $o$ (grandparent) must also be considered to determine all the possible values $\rho$ can take.
%
%
However, note that the bounds of the dropout $\rho$ do not have an explicit dependency with the values of the optimizer $o$ since, for a given MLP problem, the constant $\tau_{\text{max}}$ is fixed.
}
The ancestors can be defined recursively by starting at the parents, then passing by the parents of the parents, and so on, until the roots are reached.

\begin{mydef}[Ancestors]
    For $r \in R$ and $i \in I^r$, the ancestors of the variable $\overline{x}_i^r$, noted $\overline{\ancestors}_i^r$, is the subset of variables that are either parents or recursively ancestors of parents of $\overline{x}_i^r$, \textit{i.e.},
    \begin{equation}
        \overline{\ancestors}_i^r 
            := \overline{\parents}_i^r \ \cup \left(\bigcup_{\overline{x}_j^{r'}\in~\overline{\parents}_i^r} \overline{\ancestors}_j^{r'} 
            \right)
        \label{eq:ancestors}
    \end{equation}
where $\overline{\ancestors}_j^{r'}$ denotes the ancestors of the parent variable $\overline{x}_j^{r'}$.
The recursion in~\eqref{eq:ancestors} stops at the root nodes, \textit{i.e.}, with
$\overline{\parents}_j^{\meta}= \emptyset , \forall j \in I^{\meta}$.
    
    \label{def:ancestors} 
\end{mydef}

Now that several notions of graph theory have been adapted to this work, the definition of a hierarchical domain is formally established.
\begin{mydef}[Hierarchical domain]
A hierarchical domain is a domain with at least one variable with the decree property, \textit{i.e.}, at least one meta variable. 

\label{def:graph_structured_domain} 
\end{mydef}

Definition~\ref{def:graph_structured_domain} is one specific approach to formalize a hierarchical domain, yet there exist several equivalent statements.

\begin{theorem}[Hierarchical domain equivalences]
Let $\mathcal{X} \neq \emptyset$, with $G=(V,A)$ as its corresponding role graph. Then the following statements are equivalent:

\begin{enumerate}[leftmargin=*,labelindent=16pt]
    \itemsep0em 

    \item $\mathcal{X}$ is a hierarchical domain.

    \item The set of decree dependencies is non-empty, \textit{i.e.} $A \neq \emptyset$. 
     
    \item There is a least one point containing a variable with a least one parent.
      
\end{enumerate}
\label{thm:graph_structured_domain}
\end{theorem}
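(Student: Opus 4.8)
The plan is to establish the chain of implications $1 \Rightarrow 2 \Rightarrow 3 \Rightarrow 1$, unwinding the definitions introduced earlier in the section. The whole argument is essentially a bookkeeping exercise connecting Definition~\ref{def:graph_structured_domain} (existence of a meta variable), Definition~\ref{def:role_graph} (the arcs of $G$), and Definition~\ref{def:parents} (parents as arc-heads); the only genuine content is making sure that the presence of a meta variable forces at least one arc, and that an arc is witnessed by an actual point in $\mathcal{X}$.

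First I would prove $1 \Rightarrow 2$. Assume $\mathcal{X}$ is hierarchical, so there is a variable with the decree property, say $\overline{x}_i^{\meta}$. By Definition~\ref{def:decree_property_meta_variables}, its values determine the inclusion/exclusion or the admissible values of some other variable $\overline{x}_j^{r'}$; by Definition~\ref{def:excluded_variable} and the construction of the role graph (Definition~\ref{def:role_graph}), this dependency is recorded as an arc $(\overline{x}_i^{\meta}, \overline{x}_j^{r'}) \in A$, with $r' \in \{\metadec, \decreed\}$. Hence $A \neq \emptyset$. One subtlety to address cleanly: a meta variable is, by Definition~\ref{def:roles_of_variables}, defined precisely as a variable having the decree property, so "there is a meta variable" and "there is a variable with the decree property" are interchangeable, and a variable with the decree property must by definition act on \emph{some} other variable, which is what yields the arc.

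Next, $2 \Rightarrow 3$. Suppose $A \neq \emptyset$, so there is an arc $a = (\overline{v}, \overline{x}_j^{r'}) \in A$. By Definition~\ref{def:parents}, $\overline{v} \in \overline{\parents}_j^{r'}$, so the variable $\overline{x}_j^{r'}$ has at least one parent. It remains to produce a point of $\mathcal{X}$ that actually contains $\overline{x}_j^{r'}$ (as an included variable, not as \texttt{EXC}). Here I would invoke Definition~\ref{def:role_graph}/\ref{def:excluded_variable}: an arc is recorded in $A$ only because the child variable is genuinely a meta-decreed or decreed variable of the domain, and such a variable is by definition included in at least one point $y \in \mathcal{X}$ (otherwise it would not be a variable of the domain at all — $\mathcal{X} \neq \emptyset$ is used here). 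That point $y$ witnesses statement~3. Finally, $3 \Rightarrow 1$: if some point contains a variable with a parent, that parent is, by Definition~\ref{def:role_graph}, a meta or meta-decreed variable, and tracing back through its ancestors (Definition~\ref{def:ancestors}), which terminates at a root by Assumption~\ref{hyp:DAG}, yields a meta variable; a meta variable has the decree property, so $\mathcal{X}$ is hierarchical by Definition~\ref{def:graph_structured_domain}.

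The main obstacle I anticipate is not logical depth but precision about what "the role graph contains all included and excluded variables" really entails: I must be careful that arcs and nodes in $G$ are not vacuous artifacts but are each witnessed by concrete points of $\mathcal{X}$, which is exactly why the hypothesis $\mathcal{X} \neq \emptyset$ is stated and why Definition~\ref{def:excluded_variable} insists an excluded variable be "included in at least one other point." The $2 \Rightarrow 3$ step is where this matters most, and where I would spend a sentence or two justifying that an arc's child variable is realized as an included variable somewhere. The use of Assumption~\ref{hyp:DAG} in $3 \Rightarrow 1$ (to guarantee the ancestor recursion reaches a root rather than looping) should also be stated explicitly rather than glossed over.
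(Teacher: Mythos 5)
Your proposal is correct and follows the same route the paper intends: the paper's own proof is a single line asserting the equivalences are ``a direct consequence'' of the role graph and ancestor definitions, and your chain $1 \Rightarrow 2 \Rightarrow 3 \Rightarrow 1$ is exactly the detailed unwinding of that assertion. Your extra care in $2 \Rightarrow 3$ (that an arc's child is realized as an included variable in some point, via Definition~\ref{def:excluded_variable} and $\mathcal{X} \neq \emptyset$) and in $3 \Rightarrow 1$ (tracing ancestors to a root under Assumption~\ref{hyp:DAG}) supplies precisely the details the paper leaves implicit.
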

%
%
 %
%
\begin{proof}
The theorem 
is a direct consequence of Definitions~\ref{def:role_graph}~and~\ref{def:ancestors}.
\end{proof}

%
%
Theorem~\ref{thm:graph_structured_domain} emphasizes that the different bounds or inclusion-exclusion of variables within a hierarchical domain $\mathcal{X}$ is a consequence of interrelationships between variables, that is, its decree dependencies (Definition~\ref{def:decree_property_meta_variables}).
Note that a variable with missing entries can be modeled as a decreed variable whose inclusion is determined by an additional binary meta variable.

%
%
%

\subsection{Universal sets and restricted sets}
\label{sec:admissible_set}

%
As mentioned in Section~\ref{sec:role_graph}, meta-decreed and decreed variables are subjected to the values of their parents, since they must respect their decree dependencies for the given values of their parents.
%
%
In this section, the dependencies of a variable with respect to its parents are modeled through its restricted set, that is obtained by conditioning its universal set with the values of its parents. 
The universal set is defined next.

\begin{mydef}[Universal set]
For $r \in R$ and $i \in I^r$, the universal set $\overline{\mathcal{X}}_i^r$ of the variable $\overline{x}_i^r$ is the set that contains all possible values that the variable can take by considering  all possible values assigned to its ancestors $\overline{\ancestors}_i^r$.

\label{def:universal_set} 
\end{mydef}

{\blr In the MLP example, the universal set of the number of units $\overline{u}_1$} is $\overline{U} = U_{\texttt{ASGD}} \cup U_{\texttt{ADAM}} \cup \{\texttt{EXC} \}$.
The number of units $\overline{u}_1$ can be excluded, depending on the number of layers $\overline{l}$, hence its universal set must contain the special value $\texttt{EXC}$.

As discussed in the previous section, 
the values of the ancestors must be considered, in addition to those of the parents, to determine a universal set.
%
%
In the MLP example, the universal set of the dropout $\rho$ reduces to $\overline{P} = [0, 0.5]$, \textit{i.e.}, it is obtained when $\sum_{i=1}^l u_i = \tau_{\text{max}}$.
The bounds of the dropout $\left[0,  \frac{1}{2\tau_{\text{max}}} \sum_{i=1}^{l} u_i  \right]$ depends on its parents the number of layers $l$ and {\blr the number of units $u_i$, where $0 < i \leq l$}.
However, to determine its universal set $\overline{P}$, the constant $\tau_{\text{max}}= \max \left\{ \sum_{i=1}^{l} u_i \ : \ o \in \{\texttt{ASGD}, \texttt{ADAM} \}, \ l \in L_o, \  u_i \in U_o \text{ for } 1 \leq i \leq l   \right\}$ must be determined by considering the optimizer $o$ (grandparent).
Again, $\tau$ is a constant, hence the dropout $\rho$ has no decree dependency with the optimizer $o$.

%
%

%
%

The next step is to develop the restricted set of a variable. 
The restricted set of the variable $\overline{x}_i^r$
is the subset of the universal set $\overline{\mathcal{X}}_i^r $
such that $\overline{x}_i^r$ respects the decree dependencies for the given values of its parents $\overline{\parents}_i^r$, which are identified by arcs of the role graph $G$.
%
The formal definition of the restricted set is presented below.

\begin{mydef}[Restricted set]
For $r \in R$ and $i \in I^r$, {\blr and a given role graph $G=(V,A)$}, the restricted set of the variable $\overline{x}_i^r$ is the universal set $\overline{\mathcal{X}}_i^r$ conditioned by the values of its parents $\overline{\parents}_i^r$, expressed as 
\begin{equation*}
\restrictedset  := 
\left\{\overline{x}_i^r \in \overline{\mathcal{X}}_i^r 
    \ : \ \forall \ (\overline{v}, \overline{x}_i^r) \in A,\ 
     \overline{x}_i^r 
     \mbox{ respects the decree dependencies for the values of } \overline v\right\}.
\end{equation*}


\label{def:admissible_set} 
\end{mydef}

The notation $/ \phantom{}_{\overline{\parents}_i^r}$ for a restricted set $\restrictedset$ means ``\textit{given}'' the values of its parents.
The parents are placed as subscripts to outline the dependency.
Meta and neutral variables are always included and have no parent, hence their restricted set is simply their universal set: $\overline{\mathcal{X}}_i^{\meta} / \phantom{}_{\overline{\parents}_i^{\meta}} = \overline{\mathcal{X}}_i^{\meta} \ \forall i \in I^{\meta}$ and $\overline{\mathcal{X}}_j^{\neutral} / \phantom{}_{\overline{\parents}_j^{\neutral}} = \overline{\mathcal{X}}_j^{\neutral} \ \forall j \in I^{\neutral}$.
The restricted set of a meta-decreed or decreed variable requires the values of its parents for
determining both its inclusion and its admissible values when it is included.
If the decree dependencies, given the values of parent variables, dictate that a child is excluded, then its restricted set is $\{ \texttt{EXC} \}$.
%
%
{\blr
In the MLP example used in this work, the restricted set of the number of units $\overline{u}_i$ is
\begin{equation*}
    \overline{U}_i / \phantom{}_{\overline{o}, \overline{l}}  = 
    \begin{cases}
        \begin{alignedat}{4}        
         &U_{\texttt{ASGD}} \  && \text{ if } \overline{o}=\texttt{ASGD} && \text{ and } && 1 \leq i \leq \overline{l},  \\
         &U_{\texttt{ADAM}} \  && \text{ if } \overline{o}=\texttt{ADAM}  && \text{ and } && 1 \leq i \leq \overline{l},  \\
         &\{ \texttt{EXC} \} \  && \text{ otherwise, } && 
        \end{alignedat}
    \end{cases}
\end{equation*}
where $i \in \{1,2\}$ and $\overline{l} \in \{0,1,2\}$.

}

\subsection{Extended domain and transfer mapping}
\label{sec:extended_domain}

From the restricted sets, the extended domain $\overline{\mathcal{X}}$ is formally introduced.

\begin{mydef}[Extended domain]
The extended domain $\overline{\mathcal{X}}$ is a hierarchical domain constructed from a domain $\mathcal{X}$, its corresponding role graph $G=(V,A)$ and from the restricted sets of all included and excluded variables.
%
The extended domain $\overline{\mathcal{X}}$ is expressed as
\begin{equation*}
\overline{\mathcal{X}} := 
\left\{ \ \overline{x} \ : \ \overline{x}_i^r \in \overline{\mathcal{X}}_i^r / \phantom{}_{\overline{\parents}_i^r}, \ \forall r \in R, \ \forall i \in I^r \ \right\}.
\label{eq:domain_roles}
\end{equation*}

\label{def:extended_domain}
\end{mydef}

Definition~\ref{def:extended_domain} expresses that an extended point $\overline{x} \in \overline{\mathcal{X}}$ must respect all the inclusion-exclusion or admissible values dependencies, \textit{i.e.}, decree dependencies, between its variables. 
Recall that the main objective is to equip the domain $\mathcal{X}$ with a distance $\dist_p:\mathcal{X} \times \mathcal{X} \to \overline{\mathbb{R}}^+$ to facilitate optimization or machine learning tasks on domains that involve heterogeneous dataset.
This will be done by introducing the meta distance $\edist_p:\overline{\mathcal{X}} \times \overline{\mathcal{X}} \to \overline{\mathbb{R}}^+$ on the extended domain $\overline{\mathcal{X}}$, and then by inducing a distance on the domain via the bijective mapping defined in the following theorem.

    %
    

\begin{theorem}[One-to-one correspondence]
The transfer mapping $T_G: \mathcal{X} \to \overline{\mathcal{X}}$, which assigns the extended point $T_G(x)  \in \overline{\mathcal{X}}$ to any point $x \in \mathcal{X}$ by adding its excluded variables determined by the role graph $G$, is a bijection.


\label{thm:one-to-one}
\end{theorem}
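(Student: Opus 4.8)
The plan is to establish that $T_G$ is well-defined, injective, and surjective, exploiting the fact that the role graph $G$ is a DAG (Assumption~\ref{hyp:DAG}) so that the inclusion status and admissible values of every variable are unambiguously determined by a topological-order traversal. First I would argue well-definedness: given $x \in \mathcal{X}$, the role graph $G$ specifies, for each meta-decreed or decreed variable, the set of parents on which its decree dependencies act; processing the nodes of $G$ in topological order, each such variable is either assigned the value it holds in $x$ (when the decree dependencies, evaluated at the already-determined parent values, dictate inclusion) or the special value $\texttt{EXC}$ (when they dictate exclusion). Since meta and neutral variables are always included and carry their values from $x$ directly ($\overline{x}_i^r = x_i^r$ for $r \in \{\meta,\neutral\}$), this procedure assigns exactly one value $\overline{x}_i^r \in \overline{\mathcal{X}}_i^r /\phantom{}_{\overline{\parents}_i^r}$ to every variable, so $T_G(x) \in \overline{\mathcal{X}}$ by Definition~\ref{def:extended_domain}. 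The acyclicity is what guarantees the traversal terminates and is order-independent up to the topological sort.

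Next I would prove injectivity. Suppose $T_G(x) = T_G(y)$ for $x, y \in \mathcal{X}$. The included variables of a point $x$ are recovered from $T_G(x)$ by discarding exactly the components equal to $\texttt{EXC}$, and the remaining components retain their values; hence $x$ and $y$ have the same included variables with the same values, so $x = y$. The key observation making this precise is that the set of included variables of $x$ is itself a function of $T_G(x)$ (it is the set of indices whose component is not $\texttt{EXC}$), so no information about which variables were present is lost under $T_G$.

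For surjectivity, given any $\overline{x} \in \overline{\mathcal{X}}$, I would construct its preimage $x$ by deleting all components equal to $\texttt{EXC}$ and keeping the rest; I must then check $x \in \mathcal{X}$ and $T_G(x) = \overline{x}$. Membership $x \in \mathcal{X}$ follows because $\overline{x}$ respects all decree dependencies (Definition~\ref{def:extended_domain}), and these dependencies are exactly the constraints defining $\mathcal{X}$ once the excluded variables are removed — here one invokes Theorem~\ref{thm:graph_structured_domain} and the fact that the restricted set of an excluded variable is $\{\texttt{EXC}\}$ precisely when the parents' values force exclusion. Then $T_G(x) = \overline{x}$ because applying the topological traversal to $x$ re-derives, at each meta-decreed or decreed node, the same inclusion verdict and the same value that $\overline{x}$ already carries (consistency is guaranteed since $\overline{x}$ satisfies the decree dependencies).

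The main obstacle I anticipate is the surjectivity step, specifically verifying that the ``delete the $\texttt{EXC}$ components'' map genuinely lands in $\mathcal{X}$ and is a true inverse: one must rule out an extended point that satisfies the restricted-set constraints yet whose $\texttt{EXC}$-pattern is inconsistent with what the role graph would produce from \emph{any} honest point of $\mathcal{X}$ — i.e., showing that the restricted sets $\overline{\mathcal{X}}_i^r /\phantom{}_{\overline{\parents}_i^r}$ are tight enough that $\texttt{EXC}$ appears in a component of $\overline{x}$ \emph{if and only if} the decree dependencies evaluated at $\overline{x}$'s parent values mandate exclusion. This is really a bookkeeping argument driven by Definition~\ref{def:admissible_set}, but it is where care is needed; everything else is a straightforward consequence of the DAG structure and the definitions of the restricted sets and the extended domain.
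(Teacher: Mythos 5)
Your proposal is correct and follows essentially the same route as the paper: injectivity by observing that the $\texttt{EXC}$-pattern and the retained values determine the original point (the paper states this as the contrapositive), and surjectivity by stripping the $\texttt{EXC}$ components, checking that the resulting point respects the decree dependencies and hence lies in $\mathcal{X}$, and verifying it maps back to $\overline{x}$. The well-definedness discussion via topological traversal and your explicit flagging of the tightness of the restricted sets are additions the paper leaves implicit, but they do not change the underlying argument.
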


\begin{proof}

Injectivity. Let $x,y \in \mathcal{X}$ be such that $x \neq y$.
By definition of $T_G$, there is a least one variable whose value differ between the two extended points $\overline{x}=T_G(x)$ and $\overline{y}=T_G(y)$, since 1) there is a least one included variable between $x$ and $y$ that does not share the same value; or 2) there is a least one variable that is strictly excluded for one point between $x$ and $y$.
Therefore, there is a least one variable that also does not share the same value in the extended points $\overline{x}$ and $\overline{y}$. 
This show that $T_G$ is injective since $x \neq y \Rightarrow T_G(x) \neq T_G(y)$.

Surjectivity. Let $\overline{x} \in \overline{\mathcal{X}}$ be an extended point.
Then, let $G'=(V', E')$ be the subgraph of the role graph $G$ obtained by removing the nodes and arcs corresponding to the excluded variables that take the value $\texttt{EXC}$ in  $\overline{x}$.
%
The set $V'$ is nonempty since either $\overline{x}$ has no meta variables and thus $V'=V$, or $\overline{x}$ has at least one meta variable $\overline{x}_i^{\meta} = x_i^{\meta} \in V'$.
Thus, $V'$ is the set of variables that are included in  $\overline{x}$, and $E'$ is the set of decree dependencies between the included variables of $\overline{x}$.
By construction, each variable in  $V'$ respects the decree dependencies between the other variables in the set $V'$.
Let $x$ be the point that contains only the included variables of the extended point $\overline{x}$, \textit{i.e.}, the variables in the set $V'$.
The point $x$ necessarily belongs to domain $\mathcal{X}$, since it only contains included variables that respect the decree dependencies between each others.
Indeed, otherwise, if the decree dependencies would not be respected, then $x$ would 1) contain a variable that should not be included; or 2) not contain a variable that should be included; or 3) contain an included variable that would take a value that is not allowed by the decree dependencies.    
Then, by definition of the mapping $T_G$, $T_G(x)=\overline{x}$, since $x \in \mathcal{X}$ contains all the included variables of the extended point $\overline{x} \in \overline{\mathcal{X}}$.
This shows that $T_G$ is thus surjective, since $\forall \overline{x} \in \overline{\mathcal{X}}, \ \exists x \in \mathcal{X}$, such that $T_G(x)=\overline{x}$. 

The transfer mapping $T_G$ is both injective and surjective, thus bijective.
\end{proof}

A consequence of Theorem~\ref{thm:one-to-one} is that if a distance $\edist_p:\overline{\mathcal{X}} \times \overline{\mathcal{X}} \to \overline{\mathbb{R}}^+$ is well-defined on the extended domain $\overline{\mathcal{X}}$, then a distance \hbox{$\dist_p:\mathcal{X}\times\mathcal{X}\to\overline{\mathbb{R}}^+$} can be induced on the domain $\mathcal{X}$ with the bijective transfer mapping $T_G:\mathcal{X} \to \overline{\mathcal{X}}$.
This is done in Section~\ref{sec:distance}.

\section{Distance for hierarchical domains}
\label{sec:distance}

In this section, the meta distance $\edist_p:\overline{\mathcal{X}} \times \overline{\mathcal{X}} \to \overline{\mathbb{R}}^+$ is defined.
%
%
Section~\ref{sec:included_excluded_distance} presents the included-excluded distance function that can compute distances for variables that can be included or excluded.
%
%
Then, Section~\ref{sec:graph_structured_distance} presents the meta distance that is constructed with included-excluded distances, one per variable.

\subsection{Included-excluded distances}
\label{sec:included_excluded_distance}

%
%
For two extended points $\overline{x}, \overline{y} \in \overline{\mathcal{X}}$, the distance regarding the $i$-th variable assigned to role $r$, respectively $\overline{x}_i^r$ and $\overline{y}_i^r$, is computed through three cases:
\begin{enumerate}
    \itemsep0em

    \item both $\overline{x}_i^r$ and $\overline{y}_i^r$ are excluded, hence the distance is set to zero;

    \item exactly one variable $\overline{x}_i^r$ or $\overline{y}_i^r$ is excluded, hence the distance is set to a parameter that models a distance between a variable that is included for one extended point, and excluded for the other extended point;

   
    %
    \item  both $\overline{x}_i^r$ and $\overline{y}_i^r$ are included, hence an one-dimensional distance function $d$ is used, \textit{e.g.}, the Euclidean distance~\cite{SoLePeZaKe2023}.
    %
  
\end{enumerate}

%
%
%
%
%
%

Recall that a meta or neutral variable $\overline{x}_i^r \in \overline{\mathcal{X}}_i^r$ is always included, hence for $r \in \{\meta, \neutral\}$, the distance between $\overline{x}_i^r$ and $\overline{y}_i^r$ is always computed in the third case.
For a meta-decreed or decreed variable, the restricted sets of
$\overline{x}_i^r$ and $\overline{y}_i^r$ may differ, hence its corresponding included-excluded distance must be defined on its universal set $\overline{\mathcal{X}}_i^r$ in order to allow comparisons of any pairs $\overline{x}_i^r, \overline{y}_i^r$ with different restricted sets.
In the MLP example, the universal set of the number of units $\overline{u}_1$ is $\overline{U} = U_{\texttt{ASGD}} \cup U_{\texttt{ADAM}} \cup \{\texttt{EXC} \}$.
Hence, to compare the number of units $\overline{u}_i$ from any two pairs of extended points, the corresponding included-excluded distance of $\overline{u}_i$ must be defined on its universal set $\overline{U}$. 
The following theorem formalizes the discussion above on the three cases and the universal set by introducing a novel distance based on distances proposed in~\cite{Saves2024, BaBuDiHwMaMoLaLeSa2023}.

\begin{theorem}[Included-excluded distance]
Let $\overline{\mathcal{X}}_i^r$ be the universal set of the $i$-th variable assigned to the role $r \in R$, noted $\overline{x}_i^r$, and define $\overline{\mathcal{Y}}^r_i= \overline{\mathcal{X}}^r_i \setminus \{ \normalfont{\texttt{EXC}} \}$. 
Consider $d:\overline{\mathcal{Y}}^r_i \times\overline{\mathcal{Y}}_i^r \to \overline{\mathbb{R}}^+$,
a one-dimensional extended real-valued distance 
for the variable $\overline{x}_i^r$ when it is included, and 
$\theta_i^{r} \in \mathbb{\overline{R}}^+$
a parameter greater than or equal to 
$\sup \{ d \left(\mu, \nu \right): \mu, \nu \in \overline{\mathcal{Y}}_i^r \}/2$.
Then, for $u,v \in \overline{\mathcal{X}}_i^r$, the function $d_i^r : \overline{\mathcal{X}}_i^r \times \overline{\mathcal{X}}_i^r \to \overline{\mathbb{R}}^+$ defined by
\begin{equation} 
{d_i^r} \left( u, v \right) :=
    \left\{\begin{alignedat}{3}
        &d \left(u, v \right) \  &&\text{ if } \ u \neq \textup{\texttt{EXC}}  \neq   v \ &&\text{(both included}),\\
        &0 \ &&\text{ if } \ u = \textup{\texttt{EXC}} = v  \ &&\text{(both excluded)}, \\
        &\theta_i^r \ &&\text{ otherwise } &&\text{(one excluded)},    \\
    \end{alignedat}\right.
    \label{eq:included_excluded_distance_cases}
\end{equation}
is a one-dimensional extended real-valued distance function.
\label{thm:included_excluded_distance}
\end{theorem}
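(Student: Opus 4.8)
The plan is to verify directly that $d_i^r$ satisfies the three axioms of a distance (metric) on the universal set $\overline{\mathcal{X}}_i^r$: non-negativity together with the identity of indiscernibles, symmetry, and the triangle inequality. Non-negativity is immediate, since in each of the three branches of \eqref{eq:included_excluded_distance_cases} the value is either $d(u,v)\ge 0$ (as $d$ is an extended real-valued distance), or $0$, or $\theta_i^r\ge 0$ (as $\theta_i^r\in\overline{\mathbb{R}}^+$, and moreover $\theta_i^r\ge \sup\{d(\mu,\nu)\}/2\ge 0$). For the identity of indiscernibles, I would argue both directions: if $u=v$ then either both are $\texttt{EXC}$, giving $d_i^r(u,v)=0$, or both lie in $\overline{\mathcal{Y}}_i^r$ and $d_i^r(u,v)=d(u,u)=0$ since $d$ is a distance; conversely, if $d_i^r(u,v)=0$ then we cannot be in the ``one excluded'' branch unless $\theta_i^r=0$, in which case $\sup\{d(\mu,\nu):\mu,\nu\in\overline{\mathcal{Y}}_i^r\}=0$ forces $\overline{\mathcal{Y}}_i^r$ to be a singleton (so the remaining non-\texttt{EXC}\ values coincide anyway), and in the ``both included'' branch $d(u,v)=0$ gives $u=v$ because $d$ is a distance. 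Symmetry follows branch-by-branch from the symmetry of $d$ and the symmetric phrasing of the case conditions.

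The main work is the triangle inequality $d_i^r(u,w)\le d_i^r(u,v)+d_i^r(v,w)$ for all $u,v,w\in\overline{\mathcal{X}}_i^r$. I would organize this by a case analysis on which of $u,v,w$ equal $\texttt{EXC}$. If none of them is $\texttt{EXC}$, the inequality reduces to the triangle inequality for $d$. If all three are $\texttt{EXC}$, all terms are $0$. The interesting cases are the mixed ones, and here the key inequality to exploit is the hypothesis $\theta_i^r \ge \sup\{d(\mu,\nu):\mu,\nu\in\overline{\mathcal{Y}}_i^r\}/2$, equivalently $d(\mu,\nu)\le 2\theta_i^r$ for all included values $\mu,\nu$. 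Concretely: (i) if exactly one of $u,v,w$ is $\texttt{EXC}$, say $v=\texttt{EXC}$ and $u,w$ included, then the left side is $d(u,w)\le 2\theta_i^r = \theta_i^r+\theta_i^r = d_i^r(u,v)+d_i^r(v,w)$; if instead $u=\texttt{EXC}$ (symmetrically $w$), the left side is $\theta_i^r = d_i^r(u,v)\le d_i^r(u,v)+d_i^r(v,w)$ since the second term is $\ge 0$; (ii) if exactly two are $\texttt{EXC}$, the two sub-cases are $u=w=\texttt{EXC}$, $v$ included, where the left side is $0\le \theta_i^r+\theta_i^r$, and $u=v=\texttt{EXC}$, $w$ included (or its mirror), where the left side is $\theta_i^r = 0 + \theta_i^r$, an equality. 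Collecting these, the triangle inequality holds in every configuration.

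The step I expect to be the only genuine subtlety is the degenerate situation $\theta_i^r=0$ in relation to the identity of indiscernibles: one must observe that $\theta_i^r\ge\sup\{d(\mu,\nu)\}/2$ forces, when $\theta_i^r=0$, that $d$ is identically zero on $\overline{\mathcal{Y}}_i^r$, hence $\overline{\mathcal{Y}}_i^r$ contains at most one point, so the ``one excluded'' branch never produces a pair of genuinely distinct arguments with distance $0$ — the only such pair would have both coordinates in a singleton set and thus be equal in that coordinate; in practice $\theta_i^r>0$ whenever $|\overline{\mathcal{Y}}_i^r|\ge 2$, so no contradiction with the identity of indiscernibles arises. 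A secondary point of care is that all arithmetic takes place in $\overline{\mathbb{R}}^+$, so if $\theta_i^r=+\infty$ or some $d$-value is $+\infty$ the inequalities must be read in the extended sense (e.g. $+\infty\le+\infty$), which is harmless here since every bound used is of the form ``finite or $+\infty$'' on both sides. Apart from that, the proof is a routine but careful enumeration of cases.
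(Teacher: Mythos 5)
Your proposal is correct and follows essentially the same route as the paper: a branch-by-branch verification of the metric axioms, with the triangle inequality handled by enumerating which of the three arguments equal \texttt{EXC} and invoking $d(u,w)\le\sup\{d(\mu,\nu)\}\le 2\theta_i^r$ in the one nontrivial configuration. Your extra care about the degenerate case $\theta_i^r=0$ for the identity of indiscernibles is a small refinement over the paper, which simply asserts $\theta_i^r$ is strictly positive, but it does not change the substance of the argument.
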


\begin{proof}
Let $r \in R$ and $i \in I^r$.
The identity of indiscernibles, nonnegativity and symmetry of $d_i^r$ are trivially proven since $\theta_i^r$ is strictly positive and since $d$ is a distance function. 
The rest of the proof consists of proving that $d_i^r$ satisfies the triangle inequality.
Let $u,v,z \in \overline{\mathcal{X}}_i^r$:
\\

\noindent Case 1 (both variables are excluded): if $u = \texttt{EXC} = v$, then 
\begin{equation*}
    d_i^r(u, v)=0~\leq~d_i^r(u,z)~+~d_i^r(z,v), \text{ by nonnegativity of } d_i^r.
\end{equation*}    \medbreak

\noindent Case 2 (only one variable is excluded, by symmetry $v$): if $u \neq \texttt{EXC} = v$ and
%
\begin{itemize}[leftmargin=1cm]
    \item if $z \neq \texttt{EXC}$, then 
    \begin{equation*}
        d_i^r (u, v) = \theta_i^r~\leq~d(u,z)~+~\theta_i^r~=~d_i^r(u,z)~+~d_i^r (z,v)  , \text{ by nonnegativity of } d.
    \end{equation*}   
    
     \item if $z = \texttt{EXC}$, then
     \begin{equation*}
         d_i^r(u,v) = \theta_i^r~\leq~\theta_i^r~+~0~=~d_i^r(u,z)~+~d_i^r (z,v).
     \end{equation*}  

\end{itemize}
\medbreak

\noindent Case 3 (both variables are included): if $u \neq \texttt{EXC}$ and $v \neq \texttt{EXC}$  and
\begin{itemize}[leftmargin=1cm]

    \item if $z \neq \texttt{EXC}$, then
    \begin{equation*}
        d_i^r(u,v)~=~ d(u,v)~\leq~d(u,z)~+~d(z,v)~=~d_i^r(u,z)~+~d_i^r(z,v).  
    \end{equation*}

    \item if $z = \texttt{EXC}$, then 
    \begin{equation*}
    d_i^r(u,v)~=~d(u,v)~\leq~\sup \{ d \left(\mu, \nu \right): \mu, \nu \in \overline{\mathcal{Y}}_i^r \}~\leq~2 \theta_i^r = d_i^r(u,z)~+~d_i^r(z,v).
    \end{equation*}
\end{itemize}
\end{proof}

{\blr The following figure schematizes the included-excluded distance on a variant of the MLP example with only two variables.
Each graph represents an extended point.
}

\begin{figure}[htb!]
\begin{subfigure}[t]{0.31\textwidth}
    \centering
    \fbox{
    \scalebox{0.65}{\begin{tikzpicture}

    \node [draw, shape=ellipse, inner sep=2pt, align=center] (o1) at (0,10) 
    {\large
    \begin{tabular}{c}
    $\overline{o}$ \\
    $=\texttt{ASGD}$
    \end{tabular}
     };    
    \node [draw, shape=ellipse, inner sep=2pt, align=center, xshift=0cm, yshift=-2.25cm, at=(o1)] (a1) 
    {\large
    \begin{tabular}{c}
    $\overline{\alpha}$ \\
    $=0.9$
    \end{tabular}
     };   
    \draw[->] (o1)--(a1);

    \begin{scope}[shift={(3.5,0)}]
        \node [draw, shape=ellipse, inner sep=2pt, align=center] (o2) at (0,10) 
        {\large
        \begin{tabular}{c}
        $\overline{o}'$ \\
        $=\texttt{ASGD}$
        \end{tabular}
         };    
        \node [draw, shape=ellipse, inner sep=2pt, align=center, xshift=0cm, yshift=-2.25cm, at=(o2)] (a2) 
        {\large
        \begin{tabular}{c}
        $\overline{\alpha}'$ \\
        $=0.1$
        \end{tabular}
         };   
        \draw[->] (o2)--(a2);
    \end{scope}

    \node[align=center] at ($(a1)!0.5!(a2) - (0,1.75)$) {\Large $\Rightarrow d(\overline{\alpha}, \overline{\alpha}') = 0.8$};

\end{tikzpicture}}
    }
    \caption{Both included.}
    \label{subfig:inc_exc_distance_alpha1}
\end{subfigure}
\hspace{0.1cm}
\begin{subfigure}[t]{0.31\textwidth}
    \centering
    \fbox{
    \scalebox{0.65}{\begin{tikzpicture}

    \node [draw, shape=ellipse, inner sep=2pt, align=center] (o1) at (0,10) 
    {\large
    \begin{tabular}{c}
    $\overline{o}$ \\
    $=\texttt{ADAM}$
    \end{tabular}
     };    
    \node [draw, shape=ellipse, inner sep=2pt, align=center, xshift=0cm, yshift=-2.25cm, at=(o1)] (a1) 
    {\large
    \begin{tabular}{c}
    $\overline{\alpha}$ \\
    $=\texttt{EXC}$
    \end{tabular}
     };   
    \draw[->] (o1)--(a1);

    \begin{scope}[shift={(3.5,0)}]
        \node [draw, shape=ellipse, inner sep=2pt, align=center] (o2) at (0,10) 
        {\large
        \begin{tabular}{c}
        $\overline{o}'$ \\
        $=\texttt{ADAM}$
        \end{tabular}
         };    
        \node [draw, shape=ellipse, inner sep=2pt, align=center, xshift=0cm, yshift=-2.25cm, at=(o2)] (a2) 
        {\large
        \begin{tabular}{c}
        $\overline{\alpha}'$ \\
        $=\texttt{EXC}$
        \end{tabular}
         };   
        \draw[->] (o2)--(a2);
    \end{scope}

    \node[align=center] at ($(a1)!0.5!(a2) - (0,1.75)$) {\Large $\Rightarrow d(\overline{\alpha}, \overline{\alpha}') = 0$};

\end{tikzpicture}}
    }
    \caption{Both excluded.}
    \label{subfig:inc_exc_distance_alpha3}
\end{subfigure}
\hspace{0.1cm}
\begin{subfigure}[t]{0.31\textwidth}
    \centering
    \fbox{
    \scalebox{0.65}{\begin{tikzpicture}

    \node [draw, shape=ellipse, inner sep=2pt, align=center] (o1) at (0,10) 
    {\large
    \begin{tabular}{c}
    $\overline{o}$ \\
    $=\texttt{ADAM}$
    \end{tabular}
     };    
    \node [draw, shape=ellipse, inner sep=2pt, align=center, xshift=0cm, yshift=-2.25cm, at=(o1)] (a1) 
    {\large
    \begin{tabular}{c}
    $\overline{\alpha}$ \\
    $=\texttt{EXC}$
    \end{tabular}
     };   
    \draw[->] (o1)--(a1);

    \begin{scope}[shift={(3.5,0)}]
        \node [draw, shape=ellipse, inner sep=2pt, align=center] (o2) at (0,10) 
        {\large
        \begin{tabular}{c}
        $\overline{o}'$ \\
        $=\texttt{ASGD}$
        \end{tabular}
         };    
        \node [draw, shape=ellipse, inner sep=2pt, align=center, xshift=0cm, yshift=-2.25cm, at=(o2)] (a2) 
        {\large
        \begin{tabular}{c}
        $\overline{\alpha}'$ \\
        $=0.1$
        \end{tabular}
         };   
        \draw[->] (o2)--(a2);
    \end{scope}

    \node[align=center] at ($(a1)!0.5!(a2) - (0,1.75)$) {\Large $\Rightarrow d(\alpha, \alpha') = \theta_{\alpha} \geq 0.5$};

\end{tikzpicture}}
    }
    \caption{One excluded.}
    \label{subfig:inc_exc_distance_alpha2}
\end{subfigure}
\caption{\blr Included-excluded distance for $\alpha$, whose inclusion is controlled by the optimizer.}
\label{fig:inc_exc_distance}
\end{figure}

{\blr Figure~\ref{fig:inc_exc_distance} presents the three cases of the included-excluded distance for the update $\alpha$, whose inclusion is controlled by the optimizer.
In Figure~\ref{subfig:inc_exc_distance_alpha1}, the update is included in both extended points.
%
In Figure~\ref{subfig:inc_exc_distance_alpha3}, it is excluded in both extended points, hence the distance is zero.
Finally, in Figure~\ref{subfig:inc_exc_distance_alpha2}, the update is excluded in only one extended point.
In this case, the distance is set to a parameter $\theta_{\alpha}$ bounded below half of the largest possible distance achievable for two included updates, \textit{i.e.}, $\sup \{ d(\alpha, \alpha'): \alpha,\alpha' \in~]0,1[ \}/2=1/2$.
}

{\blr 
%
Next, multiple comments on the included-excluded distance are provided.}
%
%
%
%
First, the included-excluded distance is precisely useful for meta-decreed and decreed variables that can have different restricted sets, \textit{i.e.} that can be included or excluded, and/or have different admissible values.
As mentioned previously, for variables that are always included, such as meta and neutral variables, included-excluded distances are always computed with the both included case.
The included-excluded distance is nevertheless defined for these variables in order to obtain a concise formulation of the meta distance in Section~\ref{sec:graph_structured_distance}.
%

%
%

Second, the included-excluded distance is compatible with any variable type. 
Indeed, in~\eqref{eq:included_excluded_distance_cases}, the two cases both excluded and one excluded does not regard the variable type, and the case both included allows to use any distance function $d$.


Third, the universal set $\overline{\mathcal{X}}_i^r$ allows to compare any pair of variables $\overline{x}_i^r, \overline{y}_i^r$.
Moreover, it is also necessary to establish a lower bound on the parameter $\theta_i^r$, which ensures the triangular inequality in the last case of the proof.
The inequality $\theta_i^{r} \geq \sup \{ d \left(\mu, \nu \right): \mu, \nu \in \overline{\mathcal{X}}_i^r \setminus \{\texttt{EXC} \} \}/2$ implies that the distance $d_i^r(\overline{x}_i^r, \overline{y}_i^r) = \theta_i^r$ (one excluded case) must be at least half the largest distance between any pairs of included variables $\overline{x}_i^r,\overline{y}_i^r$, with possibly different restricted sets.
%
%
%
%
%
%
Apart from its lower bound, the parameter $\theta_i^r$ is flexible.
%
%
%

Fourth, the included-excluded distance is more formally an extended real-valued one~\cite{Be2013}, since it is allowed to take the infinite value.
The infinity value allows to consider meta-decreed or decreed variables with unbounded restricted sets.
For example, let $r \in \{\metadec, \decreed\}$ and $i \in I^r$, such that $\restrictedset = [0, \infty[ $ when it is included, and $\restrictedset = \{\texttt{EXC} \}$ when it is excluded. 
In this example, $\overline{\mathcal{X}}_i^r = [0, \infty[~\cup~\{ \texttt{EXC} \}$, therefore
 $\sup \{ d \left(\mu, \nu \right): \mu, \nu \in \overline{\mathcal{X}}_i^r \setminus \{ \texttt{EXC} \} \}=\infty$, hence the parameter $\theta_i^r$ must be set to infinity to guaranty the triangular inequality which is a behaviour one would generally avoid. 
If the restricted sets are always bounded, then included-excluded distance becomes a standard distance that maps into $\mathbb{R}$, instead of $\overline{\mathbb{R}}$.

%
%
%
%

\subsection{Meta distance and induced distance}
\label{sec:graph_structured_distance}

%
Now that the included-excluded distance has been detailed, the following theorem formally introduces the meta distance. 
\begin{theorem}[Meta distance] 
For any $p \geq 1$, the function
$\edist_p:\overline{\mathcal{X}} \times \overline{\mathcal{X}} \to \overline{\mathbb{R}}^+$ defined by
%
%
\begin{equation}
   \edist_p( \overline{x}, \overline{y}) := 
    \left( \sum\limits_{r \in R} \sum\limits_{i \in I^r}  \hspace{2pt} {d}_i^r \left( \overline{x}_i^r, \overline{y}_i^r \right) ^p \right)^{\sfrac{1}{p}},
\label{eq:graph_structured_distance} 
\end{equation}
is an extended real-valued distance function, where $R= \{ \meta, \metadec, \decreed,\neutral \}$, {\blr $i \in I^r = \{1,2,\ldots,$ $n^r\}$ and $n^r \in \mathbb{N}$ is the number of variables assigned to the role $r$}.
%
    
    %
    
    %
    %
    

\label{thm:graph_structured_distance}
\end{theorem}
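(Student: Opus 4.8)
The plan is to verify the four defining properties of an extended real-valued metric for $\edist_p$, using as the single external ingredient Theorem~\ref{thm:included_excluded_distance}, which guarantees that each coordinate function $d_i^r$ is itself an extended real-valued distance, together with the fact that the index set of pairs $(r,i)$ with $r \in R$ and $i \in I^r$ is finite. In essence $\edist_p$ is the $\ell_p$-aggregation of finitely many one-dimensional distances, so the argument is the classical proof that an $\ell_p$-combination of metrics is a metric, adapted to admit the value $+\infty$.

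First I would dispatch the three easy axioms. Non-negativity is immediate since every $d_i^r \geq 0$, so the inner sum lies in $\overline{\mathbb{R}}^+$ and so does its $1/p$-th power. Symmetry follows coordinate-wise from the symmetry of each $d_i^r$. For the identity of indiscernibles: $\edist_p(\overline{x},\overline{y}) = 0$ holds iff $\sum_{r \in R}\sum_{i \in I^r} d_i^r(\overline{x}_i^r,\overline{y}_i^r)^p = 0$, and since this is a finite sum of non-negative terms it vanishes iff $d_i^r(\overline{x}_i^r,\overline{y}_i^r) = 0$ for every $r$ and $i$; Theorem~\ref{thm:included_excluded_distance} then gives $\overline{x}_i^r = \overline{y}_i^r$ for all $r,i$, i.e. $\overline{x} = \overline{y}$, while the converse direction is trivial.

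The only non-routine step is the triangle inequality. Fix $\overline{x},\overline{y},\overline{z} \in \overline{\mathcal{X}}$. If some coordinate $d_i^r(\overline{x}_i^r,\overline{z}_i^r)$ or $d_i^r(\overline{z}_i^r,\overline{y}_i^r)$ equals $+\infty$, then the right-hand side $\edist_p(\overline{x},\overline{z}) + \edist_p(\overline{z},\overline{y})$ is $+\infty$ and there is nothing to prove; so I would assume all these coordinate distances are finite. The triangle inequality for each $d_i^r$ gives $d_i^r(\overline{x}_i^r,\overline{y}_i^r) \leq d_i^r(\overline{x}_i^r,\overline{z}_i^r) + d_i^r(\overline{z}_i^r,\overline{y}_i^r)$ for every $r,i$; since $t \mapsto t^{1/p}$ is nondecreasing on $\overline{\mathbb{R}}^+$, this yields $\edist_p(\overline{x},\overline{y}) \leq \bigl( \sum_{r \in R}\sum_{i \in I^r} (d_i^r(\overline{x}_i^r,\overline{z}_i^r) + d_i^r(\overline{z}_i^r,\overline{y}_i^r))^p \bigr)^{1/p}$. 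Applying the Minkowski inequality on the finite-dimensional space indexed by the pairs $(r,i)$ to the non-negative vectors $(d_i^r(\overline{x}_i^r,\overline{z}_i^r))_{r,i}$ and $(d_i^r(\overline{z}_i^r,\overline{y}_i^r))_{r,i}$ bounds the right-hand side by $\edist_p(\overline{x},\overline{z}) + \edist_p(\overline{z},\overline{y})$, completing the argument.

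I expect the main obstacle to be purely a matter of bookkeeping with $+\infty$: one must confirm that the extended-real conventions (notably $(+\infty)^{1/p} = +\infty$, and that a single infinite summand makes the whole $\ell_p$ expression infinite) are applied consistently, and that Minkowski's inequality is invoked only on finite vectors. The finiteness of $R$ and of each $I^r$ is exactly what keeps all of this clean; everything else is mechanical.
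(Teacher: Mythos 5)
Your proof is correct and follows essentially the same route as the paper: both arguments dispatch nonnegativity, symmetry and the identity of indiscernibles coordinate-wise, and both reduce the triangle inequality to the Minkowski inequality for the finite-dimensional $p$-norm of the vector of coordinate distances $d_i^r(\overline{x}_i^r,\overline{y}_i^r)$. You are in fact somewhat more explicit than the paper, which only exhibits the identification $\edist_p(\overline{x},\overline{y})=\|a\|_p$ and leaves implicit both the combination of the coordinate-wise triangle inequalities with Minkowski and the bookkeeping for the value $+\infty$.
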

%

\begin{proof}
The identity of indiscernibles, nonnegativity and symmetry of $\edist_p$ are trivially proven since the operations of summation and exponentiation with $p\geq1$ on the included-excluded distances in~\eqref{eq:graph_structured_distance} conserve these properties.
The rest of the proof consists of showing the triangular inequality is respected by demonstrating that $\edist_p$ is equivalent to a $p$-norm, that respects the triangular inequality.
Let $K= \{1,2,\ldots, n^{\meta}+n^{\metadec}+n^{\decreed}+n^{\neutral}\}$ be a set of indices that reorders the indices $r \in \{\meta, \metadec, \decreed, \neutral \}$ and $i \in \{1,2,\ldots, n^r \}$: 
\begin{itemize}[leftmargin=*,labelindent=18pt]
    \itemsep0em
    
    \item $a_k :=  d_i^{\meta} \left( \overline{x}_i^{\meta}, \overline{y}_i^{\meta} \right)$, 
    \hspace{13pt} for $k=i$ with $i \in \{1,2,\ldots, n^{\meta} \}$, 

    \item $a_k := d_j^{\metadec} \left( \overline{x}_j^{\metadec}, \overline{y}_j^{\metadec} \right)$, for $k=n^{\meta}+j$ with $j \in \{1,2,\ldots, n^{\metadec} \}$,  

    \item $a_k := d_l^{\decreed} \left( \overline{x}_l^{\decreed}, \overline{y}_l^{\decreed} \right)$, for $k=n^{\meta}+n^{\metadec}+l$ with $l \in \{1,2,\ldots, n^{\decreed} \}$, 

    \item $a_k :=  d_v^{\neutral} \left( \overline{x}_v^{\neutral}, \overline{y}_v^{\neutral} \right)$, \hspace{-0.05cm}for $k=n^{\meta}+n^{\metadec}+n^{\decreed}+v$ with $v \in \{1,2,\ldots, n^{\neutral} \}$. 
\end{itemize}
\noindent Finally, let $a=\left(a_1,a_2, \ldots, a_{|K|} \right)$, then
    \begin{equation*}
    \| a \|_p =  \left(\sum \limits_{k=1}^{|K|} \left|a_k \right|^p \right)^{\sfrac{1}{p}} =  \left( \sum \limits_{r \in R}  \sum \limits_{i \in I^r} d_i^r \left( \overline{x}_i^r, \overline{y}_i^r \right)^p \right)^{\sfrac{1}{p}} = \edist_p \left( \overline{x}, \overline{y} \right)   
    \end{equation*}
%
\end{proof}

For $p \rightarrow \infty$, the meta distance $\edist_{\infty}:\overline{\mathcal{X}} \times \overline{\mathcal{X}} \to \overline{\mathbb{R}}^+$ 
converge towards the supremum distance, \textit{i.e.},
%
\begin{equation}
    \edist_{\infty} \left( \overline{x}, \overline{y} \right) := \max \left \{~ d_i^r \left(\overline{x}_i^r, \overline{y}_i^r\right)  \ : \ r \in R, \ i \in I^r \right\},
\end{equation}
which is trivially a distance function by virtue of the $\max$ function.

Note that, in practice, variables often require scaling to improve the conditioning and eliminate biases related to variable scales.
In the context of the work, scaling categorical variables and excluded variables is ambiguous.
Fortunately, in our proposed distance, an included-excluded distance $d_i^r$ is defined with a flexible distance $d$ for its both included case.
Therefore, the distance $d$ can be defined using a scaling parameter, such that $d(a,b)=\omega_i^r d'(a,b)$, where $\omega_i^r>0$ is a weight parameter related to the variable $\overline{x}_i^r$ and $d'$ is a one-dimensional distance of appropriate variable type. 
The weight parameter $\omega_i^r$ can be used to automatically scale the lower bounds of the parameter $\theta_i^r$, since the lower bound of $\theta_i^r$ is defined with the one-dimensional distance $d$.
%
In Section~\ref{sec:numerical_exp}, scaling parameters will be used to better adjust our proposed distance  to  the datasets; this helps to remove biases that are related to variable scales.
%
%

Theorem~\ref{thm:one-to-one}, which establishes the bijection between the domain $\mathcal{X}$ and the extended domain $\overline{\mathcal{X}}$, implies that a distance $\dist_p:\mathcal{X}\times\mathcal{X}\to \overline{\mathbb{R}}^+$ can be induced from the meta distance $\edist_p:\overline{\mathcal{X}} \times \overline{\mathcal{X}} \to \overline{\mathbb{R}}^+$. The following corollary is a direct consequence of Theorems~\ref{thm:one-to-one}~and~\ref{thm:graph_structured_distance}.
%

\begin{corollary}[Induced distance]
For $p \geq 1$, the induced function $\dist_p:\mathcal{X}\times\mathcal{X}\to\overline{\mathbb{R}}^+$defined by
\begin{equation}
    \dist_p(x,y) := \edist_p \left( T_G(x), T_G(y) \right) = \edist_p \left( \overline{x}, \overline{y} \right),
    \label{eq:induced_distance}
\end{equation}
is an extended real-valued distance, where $\edist_p:\overline{\mathcal{X}} \times \overline{\mathcal{X}} \to \overline{\mathbb{R}}^+$ is a meta distance and $T_G:\mathcal{X} \to \overline{\mathcal{X}}$ is the bijective transfer mapping.  

\label{corollary:induced_distance}
\end{corollary}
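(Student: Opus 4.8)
The proof is the routine verification that pulling a metric back along a bijection yields a metric, so I would organize it around the four defining properties of an extended real-valued distance and simply transport each one through $T_G$. The two ingredients are already in hand: Theorem~\ref{thm:graph_structured_distance} guarantees that $\edist_p:\overline{\mathcal{X}}\times\overline{\mathcal{X}}\to\overline{\mathbb{R}}^+$ satisfies nonnegativity, identity of indiscernibles, symmetry and the triangle inequality, and Theorem~\ref{thm:one-to-one} guarantees that $T_G:\mathcal{X}\to\overline{\mathcal{X}}$ is a bijection. In particular, since $T_G(x),T_G(y)\in\overline{\mathcal{X}}$ for all $x,y\in\mathcal{X}$, the expression $\dist_p(x,y)=\edist_p(T_G(x),T_G(y))$ is well-defined and lands in $\overline{\mathbb{R}}^+$, so $\dist_p$ is a legitimate candidate distance on $\mathcal{X}$.

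Next I would dispatch the easy axioms, which transfer verbatim. Nonnegativity is immediate from nonnegativity of $\edist_p$. Symmetry follows from symmetry of $\edist_p$: $\dist_p(x,y)=\edist_p(T_G(x),T_G(y))=\edist_p(T_G(y),T_G(x))=\dist_p(y,x)$. For the triangle inequality, given $x,y,z\in\mathcal{X}$ I would apply the triangle inequality of $\edist_p$ to the three extended points $T_G(x),T_G(y),T_G(z)$, obtaining $\dist_p(x,z)=\edist_p(T_G(x),T_G(z))\leq\edist_p(T_G(x),T_G(y))+\edist_p(T_G(y),T_G(z))=\dist_p(x,y)+\dist_p(y,z)$.

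The only place where the bijectivity of $T_G$ genuinely intervenes — and thus the only step meriting a word of care — is the identity of indiscernibles. If $x=y$ then $\dist_p(x,y)=\edist_p(T_G(x),T_G(x))=0$ trivially. Conversely, if $\dist_p(x,y)=0$ then $\edist_p(T_G(x),T_G(y))=0$, and since $\edist_p$ satisfies the identity of indiscernibles this forces $T_G(x)=T_G(y)$; injectivity of $T_G$ from Theorem~\ref{thm:one-to-one} then yields $x=y$. (Surjectivity of $T_G$ is not needed for $\dist_p$ to be a distance on $\mathcal{X}$; it only certifies that the construction exhausts all of $\overline{\mathcal{X}}$.) I do not expect any real obstacle here: the argument is the standard ``pullback of a metric by an injection is a metric,'' and all the substantive work — the triangle inequality for $\edist_p$ and the bijectivity of $T_G$ — has already been carried out in Theorems~\ref{thm:graph_structured_distance} and~\ref{thm:one-to-one}.
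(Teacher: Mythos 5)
Your proposal is correct and matches the paper's intent exactly: the paper offers no written proof, stating only that the corollary is a direct consequence of Theorems~\ref{thm:one-to-one} and~\ref{thm:graph_structured_distance}, and your pullback argument (with the correct observation that only injectivity of $T_G$ is needed, for the identity of indiscernibles) is precisely the routine verification being left implicit.
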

%

%
Corollary~\ref{corollary:induced_distance} unpacks most of the contributions.
To arrive at Corollary~\ref{corollary:induced_distance}, it was necessary to: 1) define an extended point $\overline{x}$, restricted sets and the extended domain $\overline{\mathcal{X}}$ using notions from graph theory; 2)~define the transfer mapping $T_G:\mathcal{X} \to \overline{\mathcal{X}}$, and prove that it is bijective, 3) define the included-excluded distances on the universal set for tackling variables that can either be included or excluded, or with different admissible values, and 4) define the meta distance $\edist_p:\overline{\mathcal{X}} \times \overline{\mathcal{X}} \to \overline{\mathbb{R}}^+$ based on the contributions mentioned above.

%
%
%

\section{Computational experiments on mixed-variable and hierarchical problems}
\label{sec:numerical_exp}

{\blr 

This section compares three approaches on mixed-variable and hierarchical regression and classification problems with simple distance-based models.
%
%
The first approach, named {\sf Sub}, divides a problem into subproblems, each assigned to a portion of the domain in which the included variables are fixed. 
In the experiments, {\sf Sub} constructs one model per subproblem, and each model uses an Euclidean distance defined on their corresponding subdomain. 

The second approach, called {\sf Meta}, constructs a single model with the induced distance from~\eqref{eq:induced_distance}, where $p=2$.
This approach directly tackles a hierarchical problem and aggregates data across the subproblems. 
%
%
%

The third approach, named {\sf Hybrid}, employs the variable-size framework~\cite{PeBrBaTaGu2021} in which one layer of hierarchy is tackled and only shared included variable are compared. 
{\sf Hybrid} constructs a single model for the first two variants, and two models for the other variants.

The experiments focus on comparing approaches and their implementation of a distance.
For this reason, simple distance-based models are employed: Inverse Distance Weighting (IDW) models for regression, and $K$-nearest neighbors models for classification. 
Earth mover's (or Wasserstein) distances are not considered~\cite{GaDuKaSe2018, SoLePeZaKe2023}, since their computations require solving an optimal transport problem.
Distances are evaluated repeatedly in the experiments, and they are unsuitable in this setting.

%
%
%
%
%

\subsection{Description of the hyperparameters domain and its datasets}
\label{sec:experiments_description_pb}

%
%
%
%

%
In the experiments, data points are vector of hyperparameters.
In order to restrict the number of variables, few important hyperparameters are intentionally discarded, such as the momentum, the activation function and the batch size. 
The hyperparameters domain (HPD) of
below, is used to generate datasets in hierarchical and mixed-variable domains.

%
%

%

\begin{figure}[htb!]
\centering
    \scalebox{0.8}{\begin{tikzpicture}
\small 

    \node [shape=rectangle, align=center](table1) at (0,0) {
        \begin{tabular}{lccc} 
        \toprule
            HP & Variable & Bounds & Role \\ \midrule
            Learning rate & $r$ & $]0,1[$ & Neutral \\ 
            \hdashline
            Optimizer & $o$ & $\{ \texttt{ASGD}, \texttt{ADAM} \}$  & Meta \\
            \bottomrule
        \end{tabular}};

    \node [shape=rectangle, align=center, xshift=-4.85cm, yshift=-3.5cm, at=(table1)](table2){
        \begin{tabular}{llccc} 
        \toprule
            \multicolumn{2}{l}{HP} & Variable & Bounds & Role \\ \midrule
            \multicolumn{2}{l}{Decay} & $\alpha_1$ & $]0,1[$ & Decreed \\
            \multicolumn{2}{l}{Power update} & $\alpha_2$ & $]0,1[$ & Decreed \\
            \multicolumn{2}{l}{Average start} & $\alpha_3$   & $]0,1[$ & Decreed \\ 
            \multicolumn{2}{l}{\# of hidden layers} &  $l$ & $L_{\texttt{ASGD}}$ & Meta-dec. \\ \hdashline
             &  \# of units layer $i$ & $u_i$  & $U_{\texttt{ASGD}}$ & Decreed \\
            \bottomrule
        \end{tabular}};

         \node [shape=rectangle, align=center, xshift=4.85cm, yshift=-3.5cm, at=(table1)] (table3)  {
        \begin{tabular}{llccc} 
        \toprule
            \multicolumn{2}{l}{HP} & Variable & Bounds & Role \\ \midrule
            \multicolumn{2}{l}{Running average 1} & $\beta_1$ & $]0,1[$ & Decreed \\
            \multicolumn{2}{l}{Running average 2} & $\beta_2$ & $]0,1[$ & Decreed \\
            \multicolumn{2}{l}{Numerical stability} & $\beta_3$   & $]0,1[$ & Decreed \\ 
            \multicolumn{2}{l}{\# of hidden layers} &  $l$ & $L_{\texttt{ADAM}}$ & Meta-dec. \\ \hdashline
             &  \# of units layer $i$ & $u_i$  & $U_{\texttt{ADAM}}$ & Decreed \\
            \bottomrule
        \end{tabular}};
   
    \draw[->] (table1)--(table2) node[midway, anchor=west, above, xshift=-2cm, yshift=-0.15cm] {if $o=\texttt{ASGD}$};
    \draw[->] (table1)--(table3) node[midway, anchor=east, above, xshift=2cm, yshift=-0.15cm] {if $o=\texttt{ADAM}$};

    \node [shape=rectangle, align=center, xshift=0cm, yshift=-7cm, at=(table1)](table5){
        \begin{tabular}{lccc} 
        \toprule
            HP & Variable & Bounds & Role \\ \midrule
            Dropout & $\rho$ &
            $\left[    
           0, \quad  \frac{1}{2\tau_{\text{max}}} \sum_{i=1}^{l} u_i
            \right]$ & Decreed \\
            \bottomrule
        \end{tabular}};
    
    \draw[->] (table2)--(table5) node[midway, anchor=east, above, xshift=1.85cm, yshift=-0.25cm] {};
    \draw[->] (table3)--(table5) node[midway, anchor=west, above, xshift=-1cm] {};
       
\end{tikzpicture}}
\caption{Hierarchical and mixed-variable domain of the HPD.}
\label{fig:working_example_table}
\end{figure}

In Figure~\ref{fig:working_example_table}, the bounds of the hyperparameters $\alpha_1$, $\alpha_2$, $\alpha_3$, $\beta_1$, $\beta_2$ and $\beta_3$ are normalized. 
The HPD has two levels of hierarchy, it contains meta and meta-decreed variables influencing inclusions or bounds.

To perform regression or classification, data points must be associated with corresponding images.
The image $f(x) \in [0,100]$ of a vector of hyperparameters $x \in \mathcal{X}$ consists of a performance test score between 0 and 100, as illustrated below.
%
\begin{figure}[htb!]
\centering
  \scalebox{0.9}{\begin{tikzpicture}

    \small
    
    \node(L) [] at (-4.75,0) {
    \begin{tabular}{c}
    Vector of \\
    HPs \\
    $x$
    \end{tabular} 
    };

    \node(C) [draw, minimum width=3cm, minimum height=3cm ] at (0,0) {
    \begin{tabular}{l} 
    %
    \textbullet \ Architecture  \\
     \\
    \textbullet \ Training   \\ \\
    \textbullet \ Validation  \\
    \end{tabular}
    };
    \node(R) [] at (4.75,0) {
    \begin{tabular}{c}
    Test score \\
    $f(x)$ \\
    
    \end{tabular} 
    };
    \draw [-{Stealth[length=2mm, width=1.5mm]}] (L)--(C);
    \draw [-{Stealth[length=2mm, width=1.5mm]}] (C)--(R);
    

\end{tikzpicture}}
\caption{\blr A data point $x \in \mathcal{X}$ and its image $f(x) \in [0,100]$ in the computational experiments.}
\label{fig:data_point_HPO}
\end{figure}
%
%

Figure~\ref{fig:data_point_HPO} illustrates that computing the image $f(x)$ of a vector of hyperparameters $x$ involves training, validating and testing a deep model.
The architecture of this deep model is not treated as a variable, but rather a fixed parameter influencing the behavior of $f$.

In the experiments, there are 40 independent datasets.
A dataset is identified with the nomenclature HPD-\textbf{variant}-\textbf{size}-\textbf{architecture}, where 
the three elements in bold are:
\begin{itemize}
    \item a \textbf{variant} of the HPD, in which some hyperparameters are fixed.
    The variants range from \#1 to \#5, representing increasing difficulties. They are detailed in~\ref{sec:appendixA_variants}.

    \item a \textbf{size}, amongst Very Small (VS), Small (S), Medium (M) and Large (L), representing the amount of data available.
    The sizes represent a second lever of difficulty, as smaller datasets are typically more difficult to tackle.
    The datasets are independent across sizes, \textit{i.e.}, there are no subsets of datasets.
    Their sizes are detailed in~\ref{sec:appendixB_sizes}.

    \item a type of \textbf{architecture}, either MLP or Convolutional Neural Network (CNN), is used to structure the hyperparameters and compute $f$ at a point $x \in \mathcal{X}$.
    The type of architecture introduces diversity into the data, as discussed previously.

\end{itemize}

For the CNN architecture, units represent channels.
The MLP architecture uses Fashion-MNIST~\cite{xiao2017fashion}, whereas CNN employs CIFAR10~\cite{krizhevsky2009learning}.
Fashion-MNIST and CIFAR10 are strictly used for generating data points, as in Figure~\ref{fig:data_point_HPO}: they must not be confused with the actual datasets used in the computational experiments.
Details on the generation of data are provided in~\ref{sec:appendix_data_generation}.

The datasets are randomly split into training (50\%), validation (25\%) and test (25\%) datasets.
%
%
A dataset paired with a random seed is referred to as an \textit{instance}, and is denoted $p$.
Also, an approach paired with a model is called an \textit{approach-model}, and is denoted $s$.

\subsection{Classification experiments}
\label{sec:classification}

The first experiment evaluates the classification accuracy on test datasets using a $K$-nearest neighbors model.
Training datasets contain points consisting of available neighbors, and validation datasets are used to adjust the parameters of the approach-models.
The budget of evaluations to train an approach-model $s \in \{ \text{{\sf Sub}-KNN, {\sf Hybrid}-KNN, {\sf Meta}-KNN} \}$ on a given instance $p$ is 100$n_{p,s}$, where $n_{p,s}$ is the number of parameters of $s$ on $p$.

The datasets considered are the HPD-\textbf{X}-\textbf{VS}-\textbf{MLP} and HPD-\textbf{X}-\textbf{S}-\textbf{MLP}, \textit{i.e.}, the 
very small and small ones with the MLP architecture.
The experiments are performed using 5 labels.
The images $f(x) \in [0,100]$ in the datasets are transformed into labels, by partitioning uniformly the interval $[0,100]$ into five subintervals of length $20$ and assigning labels.
For example, an image $f(x) \in {[}0, 20{[}$ is replaced by the label 0, and an image $f(y) \in [80, 100]$ by the label 4.
The MLP architecture is used for classification since it produces images that are more uniformly distributed over $[0,100]$ than the CNN architecture.
Table~\ref{tab:classfication} presents the accuracies obtained on test datasets, and these accuracies are computed as an average over 20 random seeds.

\begin{table}[htb!]
\small
\renewcommand{\arraystretch}{1.2}
\centering
\begin{subtable}[t]{0.48\textwidth}
\centering
\begin{tabular}{c|ccc}
\hline
\textbf{Variant} & \textbf{Sub} & \textbf{Hybrid} & \textbf{Meta} \\
\hline \hline
\#1 & 0.57 & 0.67 & \textbf{0.68} \\ \hline
\#2 & 0.60 & \textbf{0.67} & 0.62 \\ \hline
\#3 & 0.52 & \textbf{0.56} & 0.51 \\ \hline
\#4 & 0.54 & \textbf{0.62} & 0.57 \\ \hline 
\#5 & \textbf{0.58} & 0.57 & \textbf{0.58} \\ \hline
\end{tabular}
\caption{Very small size.}
\label{tab:classification_sub1}
\end{subtable}
\hfill
\begin{subtable}[t]{0.48\textwidth}
\centering
\begin{tabular}{c|ccc}
\hline
\textbf{Variant} & \textbf{Sub} & \textbf{Hybrid} & \textbf{Meta} \\
\hline \hline
\#1 & 0.67 & 0.72 & \textbf{0.73}\\ \hline
\#2 & 0.67 & \textbf{0.71} & 0.70 \\ \hline
\#3 & 0.59 & \textbf{0.62}& 0.60 \\ \hline
\#4 & 0.57 & \textbf{0.64} & 0.61 \\ \hline 
\#5 & 0.53 & \textbf{0.55} & 0.54 \\ \hline
\end{tabular}
\caption{Small size.}
\label{tab:classification_sub2}
\end{subtable}
\caption{\blr Average accuracies on test datasets with 20 random seeds.}
\label{tab:classfication}
\end{table}

While {\sf Meta} achieved the highest accuracy on variant~\#1, {\sf Hybrid} performed best overall.
%
{\sf Sub} obtained the worst performance across the experiments.
The standard deviations obtained are consistently similar across all variants, sizes and approaches, ranging from 0.05 to 0.1.

\subsection{Data profiles on regression problems}
\label{sec:data_profiles}

The main computational experiments are presented in this section.
%
%
Root Mean Squared Errors (RMSE) on validation and test datasets, respectively referred to RMSE validation and RMSE test, are computed and viewed as functions.
%
%

%
Before computing the RSME test, parameters are adjusted with respect to a corresponding RMSE validation.
The optimization is expressed as a least-squares problem focused on minimizing the RMSE validation.
The decision variables are the parameters of an approach-model $s \in \{ \text{{\sf Sub}-IDW, {\sf Hybrid}-IDW, {\sf Meta}-IDW} \}$.
%
%
%
Each approach-model has weight parameters for the hyperparameters.
%
For {\sf Sub} and {\sf Hybrid}, some hyperparameters are repeated across subproblems, leading to distinct weight parameters per subproblem.
{\sf Meta} requires parameters for the excluded-included distances between hyperparameters that can take the special value \texttt{EXC}, and a parameter for the categorical distance of the optimizer (in the situations where the categorical distance is not fixed).

The derivatives of the objective functions are unknown.
The optimization of the RMSE validation is done with the open-source blackbox optimization software \nomad~\cite{nomad4paper} that is based on the Mesh Adaptive Direct Search algorithm~\cite{AuDe2006}.
RMSE tests are used to assess the generalization ability of the approach-models, and they are not subject to any optimization.


%
Each dataset is instantiated by 5 seeds, for a total of 200 instances contained in the set $\mathcal{P}$.
An approach-model $s$ is said to have $\tau$-solved an instance $p \in \mathcal{P}$ when its RMSE is within a relative error of $\tau$ with respect to the best RMSE obtained by any $s$ on this instance $p$.
At evaluation $k\geq 0$, the convergence test of an approach-model $s$ on a instance $p$ is expressed as
\begin{equation}
\frac{\text{RMSE}_{p,s}(k) -  \text{RMSE}_{p}}{\text{RMSE}_{p}} \leq \tau,
\end{equation}
where $\text{RMSE}_{p,s}(k)>0$ is the RMSE of $s$ on instance $p$ at an evaluation $k\geq 0$, $\text{RMSE}_{p}>0$ is the best RMSE obtained on that instance $p$ and $\tau \in [0,1]$ is a given tolerance. 
%
%
Data profiles~\cite{MoWi2009} represent the proportion of $\tau$-solved instances by the approach-models:
\begin{equation}
    \text{data}_s\left( \kappa \right) \coloneq \frac{1}{\left| \mathcal{P} \right|} \left| \left\{ p \in \mathcal{P} \, : \, \frac{k_{p, s}}{n_{p,s} + 1}    \leq \kappa \right\} \right| \in [0,1]
\end{equation}
where $k_{p,s} \geq 0$ is the number of evaluations required for an approach-model $s$ to $\tau$-solved an instance $p \in \mathcal{P}$ and $n_{p,s}$ is the number of parameters of a solver $s$ on the instance $p$.
The horizontal axis $\kappa$ of a data profile represents groups of $(n_{p,s}+1)$ evaluations, which depends on the instance $p$ and the approach-model $s$, since the number of parameters $n_{p,s}$ (decision variables) depends on both of these.

In the experiments, the budget of evaluations for an approach-model $s$ on instance $p$ is 200$n_{p,s}$ evaluations.
Each optimization starts with Latin hypercube sampling (LHS) of 33\% of the total budget of evaluations.
See Appendix~A.3. in~\cite{AuHa2017} for more details.

Data profiles are presented in Figure~\ref{fig:data_profiles} for both the RMSE validation (objective function) and the RMSE test.  
%
%
%
%
%
%
The optimization problem is only performed on the RMSE validation.
%
Depending on the variant and size, solving an instance required between a few minutes and 75 minutes on an 11th generation Intel i7-11800H (2.30 GHz) CPU.
The maximum runtime of approximately 75 minutes was observed for all three approaches for instances with variant \#5 and the size set to large.
%
RMSE tests are evaluated each time an improvement on the RMSE validation is found, in order to build the data profiles for the RMSE tests. 
%

\begin{figure}[htb!]
\centering
\begin{subfigure}{\textwidth}
\includegraphics[width=1\linewidth]{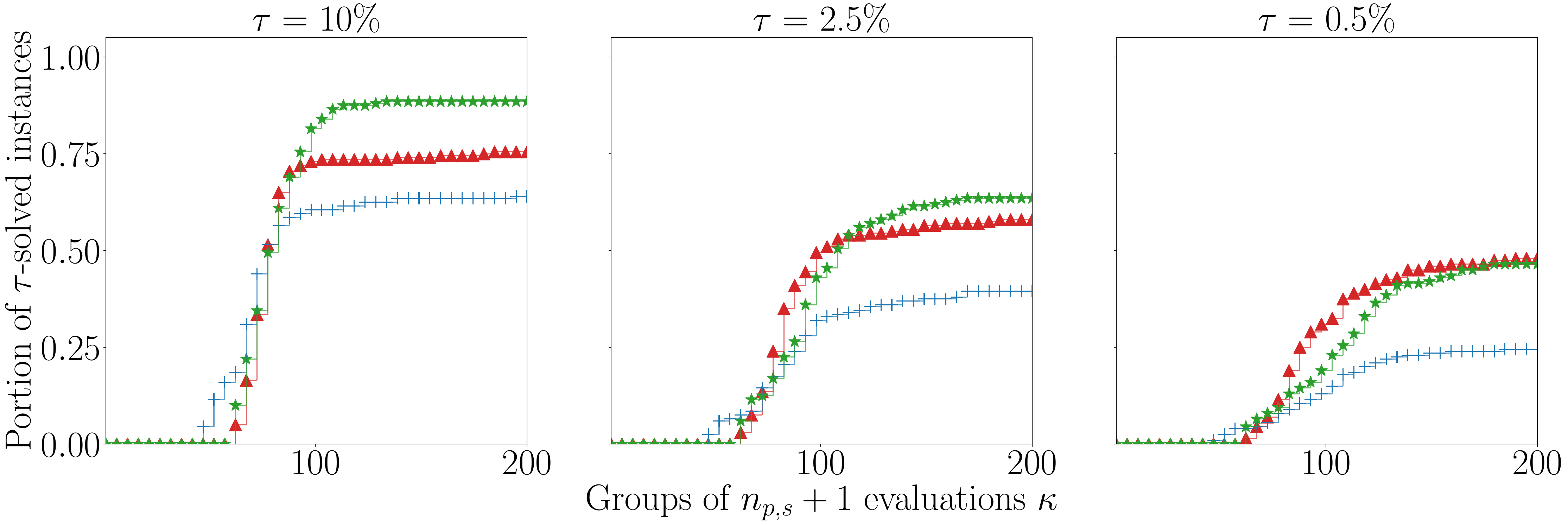}
\caption{Validation.}
\label{subfig:dataprofiles_validation}
\end{subfigure}
\begin{subfigure}{\textwidth}
\includegraphics[width=1\linewidth]{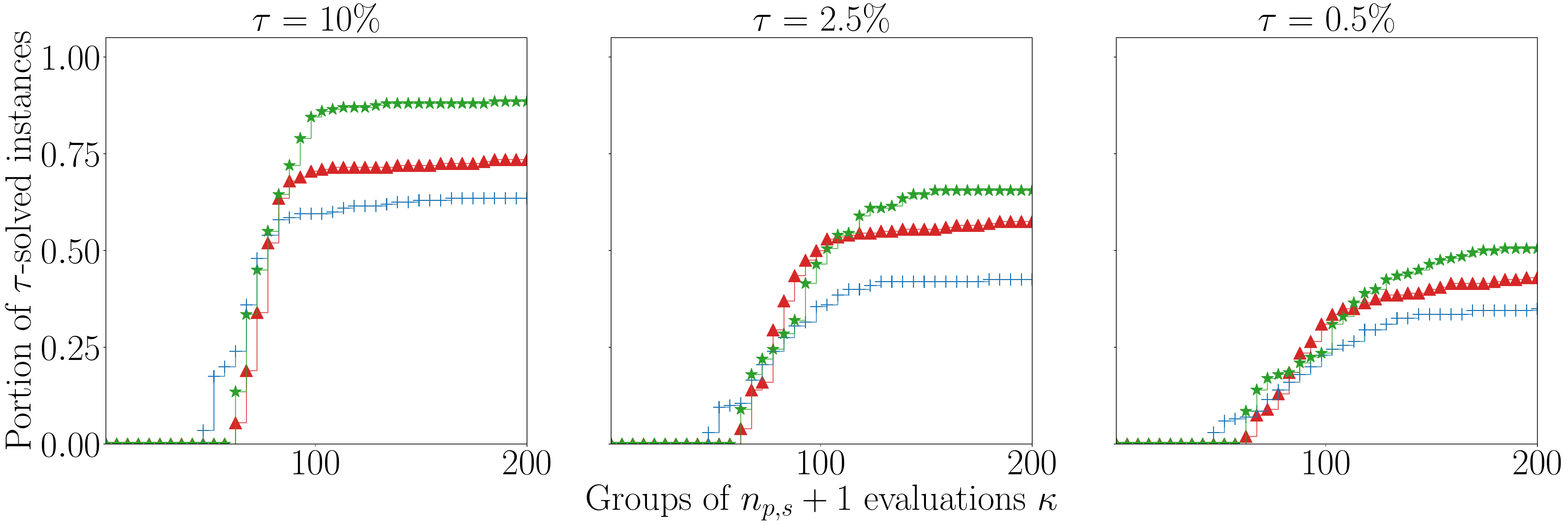}
\caption{Test.}
\label{subfig:dataprofiles_test}
\end{subfigure}
\begin{subfigure}{\textwidth}
\centering
\includegraphics[width=0.75\linewidth]{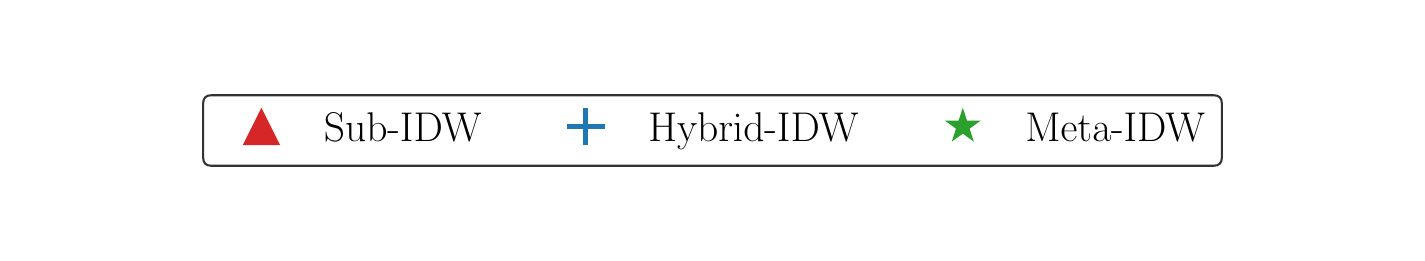}
\end{subfigure}
\caption{\blr Data profiles of the solvers on the regression problems with IDW models.}
\label{fig:data_profiles}
\end{figure}

The data profiles for the RMSE validation and test clearly indicates that {\sf Meta} outperforms the two other approaches.
%
For example, Figure~\ref{subfig:dataprofiles_test} with $\tau=0.5\%$ shows that {\sf Meta}-KNN $\tau$-solves approximately 30\% of the instances with $100(n_{p,s}+1)$ evaluations, and after $200(n_{p,s}+1)$ evaluations it $\tau$-solves approximately 50\% of the instances, whereas {\sf Hybrid} only $\tau$-solves 30\%.
%
%
The {\sf Hybrid} provides the worst performance on all profiles.
All data profiles exhibit a plateau during initial iterations, due to the LHS initialization.
%

\subsection{Data aggregation with selected parameters experiments}
\label{sec:data_aggregation}

In this section, computational experiments are performed to study the aggregation of data.
The RMSE test is plotted against the number of points in a (partial) training dataset.
IDW models are constructed with a partial training dataset, which is increased iteratively with an additional random data point drawn from the training dataset without replacement.
The partial training dataset begins with a random data point from each subproblem, and points are added until it becomes the training dataset.
Training datasets represent interpolation points for IDW models.
%
In early iterations, the numbers of points are limited, particularly when distributed across different subproblems.

%
%
%

%

The datasets are the HPD-\textbf{X}-M-\textbf{A}, \textit{i.e.}, all variants and both architecture are considered, while size is fixed at medium. 
The adjusted parameters obtained using the first random seed from Section~\ref{sec:data_profiles} are selected and fixed for all iterations. 
\begin{figure}[htb!]
\centering
\includegraphics[width=0.95\linewidth]{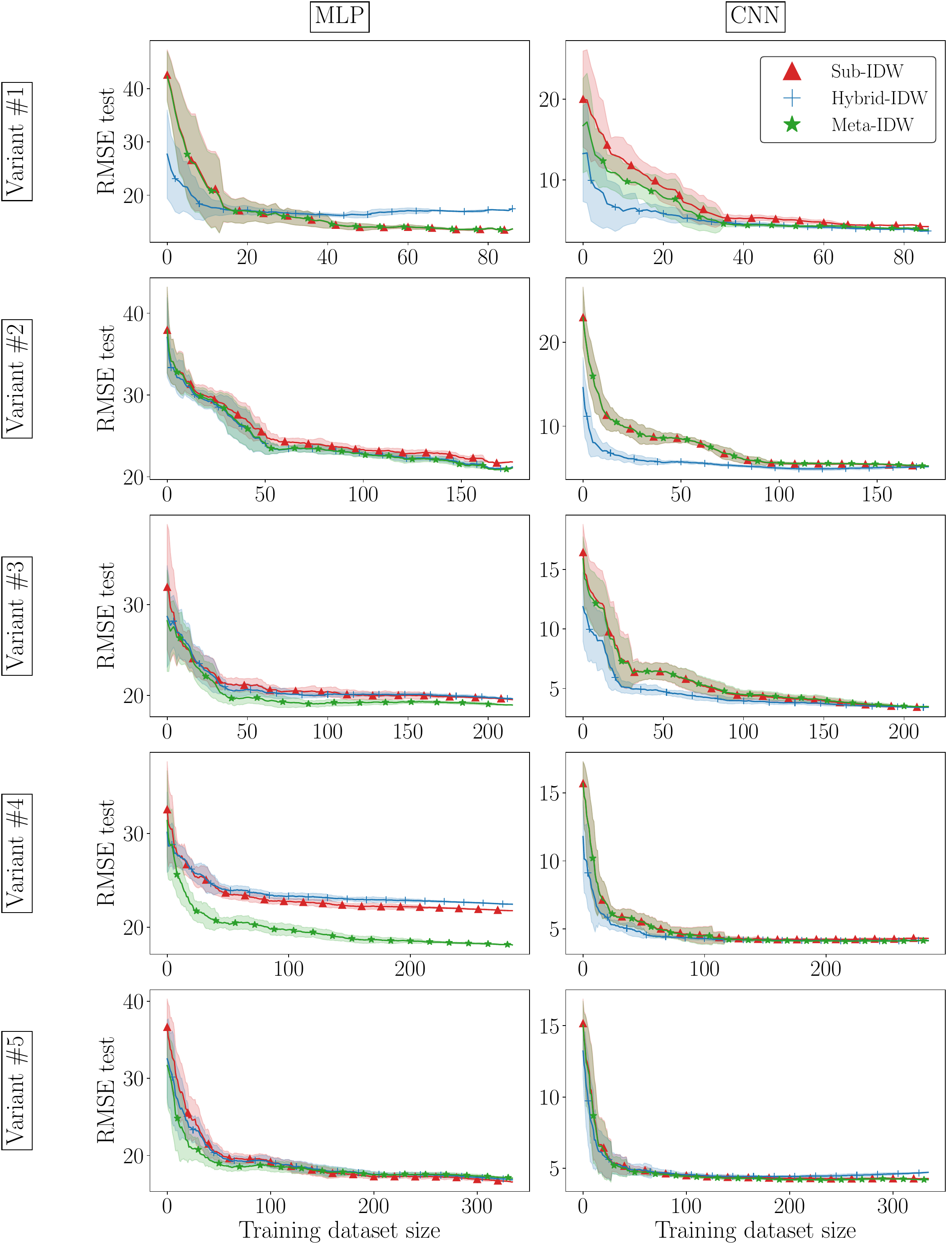}
\caption{\blr RMSE tests with iteratively increasing training data points and selected parameters.
Each subfigure is composed of 20 runs on different random seeds.
%
} 
\label{fig:numerical_graphs_training}
\end{figure}

In Figures~\ref{fig:numerical_graphs_training}, solid lines represent means, and shaded areas represent standard deviations. 
Each point in the graphs is averaged with 20 random seeds.
For the MLP architecture, {\sf Meta} performs best, whereas {\sf Sub} and {\sf Hybrid} have similar performances. 
For the CNN architecture, {\sf Hybrid} performs best at early iterations.
Towards the final iterations, all three approaches have similar performances, except for variant \#5, where {\sf Meta} has a slightly best RMSE test.
%

%

}

\section{Discussion}
\label{sec:conclusion}

{\blr

The present work focused on modeling mixed-variable and hierarchical domains in which two points do not share the same variables.
The research gap addressed is the lack of a distance function that is interpretable and is computed in constant-time.
To formalize the distance, a modeling framework is proposed, generalizing the following state-of-the-art frameworks: mixed-variable domains with (strictly) meta variables~\cite{G-2022-11}; tree-structured spaces~\cite{bergstra2011algorithms}; hierarchical spaces~\cite{HuOs2013}; variable-size design space~\cite{BaUrBrBa2023, PeBrBaTaGu2021}; and conditional search space (or neural architecture search) \cite{JiXuZh2022}.

The approach employed in this work can be summarized in Figure~\ref{fig:recap}, reprising Figure~\ref{subfig:intro_big_picture_approach} from the introduction.
\begin{figure}[htb!]
\centering
  \scalebox{1}{\begin{tikzpicture}

    \definecolor{myyellow}{RGB}{0,170,0}
    \definecolor{mygreen}{RGB}{0,170,0}
    \definecolor{myblue}{RGB}{0,160,255}
    \definecolor{myred}{RGB}{255,0,0}
    \definecolor{mygray}{RGB}{255,255,255}
    \definecolor{mywhite}{RGB}{255,255,255}

    \node[ellipse, draw, minimum width=1.75cm, minimum height=2.75cm, label={[align=center] above: $\left(\mathcal{X}, \dist_p \right)$}] (Domain) at (5,0) {};

    \begin{scope}[shift={(4.45,-0.35)}, scale=0.275] 
        \begin{scope}[rotate=10, shift={(0.25,2.9)}]
            \fill[myyellow] (0,0) rectangle (4,1);
            \draw[thick] (0,0) rectangle (4,1);
            \foreach \x in {1,2,3} \draw[thick] (\x,0) -- (\x,1);
        \end{scope}

        \begin{scope}[rotate=8, shift={(1,1.6)}]
            \fill[mygreen] (0,0) rectangle (4,1);
            \draw[thick] (0,0) rectangle (4,1);
            \foreach \x in {1,2,3} \draw[thick] (\x,0) -- (\x,1);
        \end{scope}

        \begin{scope}[shift={(0,-0.2)}]
            \fill[myblue] (0,0) rectangle (3,1);
            \draw[thick] (0,0) rectangle (3,1);
            \foreach \x in {1,2} \draw[thick] (\x,0) -- (\x,1);
        \end{scope}

        \begin{scope}[rotate=-10, shift={(2,-1.5)}]
            \fill[myred] (0,0) rectangle (2,1);
            \draw[thick] (0,0) rectangle (2,1);
            \draw[thick] (1,0) -- (1,1);
        \end{scope}
    \end{scope}


    \begin{scope}[shift={(1, 0)}]

        \node[ellipse, draw, minimum width=1.75cm, minimum height=2.75cm, 
        label={[align=center]$\left( \overline{\mathcal{X}}, \edist_p \right)$}] 
        (Extended) at (10,0) {};

        \begin{scope}[shift={(9.475, -0.55)}, scale=0.275]
            \fill[myyellow] (0,3) rectangle (4,4);
            \fill[mygreen] (0,2) rectangle (4,3);
            \fill[myblue] (0,1) rectangle (3,2);
            \fill[mywhite] (3,1) rectangle (4,2);
            \fill[myred] (0,0) rectangle (2,1);
            \fill[mygray] (2,0) rectangle (4,1);
    
            \draw[thick] (0,0) rectangle (4,4);
            
            \draw[thick] (0,3) -- (4,3);
            \draw[thick] (0,2) -- (4,2);
            \draw[thick] (0,1) -- (4,1);
    
            \draw[thick] (1,0) -- (1,4);
            \draw[thick] (2,0) -- (2,4);
            \draw[thick] (3,0) -- (3,4);
        \end{scope}
          
    \end{scope}

    %


    \draw [{Latex[length=3mm]}-{Latex[length=3mm]}] 
    (5.9, 0.155) to [out=0, in=180] 
    node[midway, above] {$T_G$} 
    (7.6+0.5+2, 0.15);


\end{tikzpicture}}
\caption{\blr A synthesis of the work.}
\label{fig:recap}
\end{figure}
A complicated heterogeneous dataset in a hierarchical domain $\mathcal{X}$ is mapped in the extended domain $\overline{\mathcal{X}}$, where the meta distance $\edist_p$ is used to compare any pair of points of the dataset.
A mapping $T_G$ based on a graph structure $G$ establishes a one-to-one correspondence between the original domain $\mathcal{X}$ and the extended domain $\overline{\mathcal{X}}$.
With this mapping, a distance $\dist_p$ is induced on the domain $\mathcal{X}$.

Theoretically, the meta distance should provide better generaliziability than simpler approaches partitioning domains, as it it allows to use more data simultaneously.
The computational experiments seems to confirm this.
Our approach using the meta distance performed globally better than an approach using subproblems and another one based on the variable-size framework~\cite{PeBrBaTaGu2021}.

This work is an important stepping stone towards efficient machine learning and optimization involving mixed-variable and hierarchical domains.
However, much more work remains. 
Currently, more complicated hierarchical interrelationships between variables, such as incompatibilities between values of different variables, are being developed in a subsequent work.
Moreover, further computational experiments on diverse problems, tasks and models are required.
For instance, Gaussian processes are commonly used for such mixed-variable and/or hierarchical domains~\cite{PeBrBaTaGu2021, BaDiMoLeSa2023, BaBuDiHwMaMoLaLeSa2023}, and these models will be studied in future work, with a focus on Bayesian optimization.

}

\section*{Data availability statement} Scripts and data are publicly available at
\url{https://github.com/bbopt/graph_distance}. 

\section*{Declaration of interest statement.} On behalf of all authors, the corresponding author states that there is no conflict of interest. 

\section*{Acknowledgments} 
We express our gratitude to Amaury Diopus'kin for the \pytorch~implementation, produced during its internship in summer 2022, that was used for data generation.

\appendix

\section{Variants of the HPD.}
\label{sec:appendixA_variants}

%
%
%
%
%
%

%
%

Table~\ref{tab:problems_new} details the five variants and the parameters (decision variables) for the {\sf Sub} and {\sf Meta} approaches.
The variants are of increasing difficulty, notably in terms of number of subproblems and variables present.
Variant \#1 fixes the optimizer, hence the number of layers is meta instead of meta-decreed.
There are 3 subproblems, each assigned to a fixed number of layers.
Variant \#2 adds the hyperparameters $\alpha_{1:3}$.
Variant \#3 frees the optimizer $o \in \{ \texttt{ASGD}, \texttt{ADAM} \}$, and the hyperparameters $\beta_{1:3}$ are introduced via $o=\texttt{ADAM}$. 
There are 4 subproblems, each assigned to a pair $(o,l)$.
Variant \#4 adds a subproblem by allowing $l=3$ when $o=\texttt{ADAM}$.
The number of layers $l$ becomes a meta-decreed variable, since $L_{\texttt{ASGD}} \neq L_{\texttt{ADAM}}$. 
Variant \#5 adds the dropout $\rho$.

For {\sf Sub}, the variants are treated by their disconnected graphs separately.
In contrast, {\sf Meta}, considers each variant has a single problem, represented by a rectangular border. 
For {\sf Hybrid}, only one layer of hierarchy is considered. 
%
Hence, in variant 1 and 2, all subproblems are considered since there is one layer of hierarchy.
For the other variants, {\sf Hybrid} has two subproblems, one per optimizer, since the layer of hierarchy tackle is the one related to the number of layers.


%
%
%
%
%
%
%

\begin{table}[htb!]
    \begin{subtable}{.5\linewidth}
      \centering
      \footnotesize
\begin{tabular}{|c|c|c|clll|c|}
\hline
Var.              & $o$                              & $l$ & \multicolumn{4}{c|}{Variables}            & \begin{tabular}[c]{@{}c@{}}\# of \\ variables\end{tabular} \\ \hline \hline
\multirow{3}{*}{\#1} & \multirow{3}{*}{$\texttt{ASGD}$} & 1   & $r$ & $u_{1}$   &                &        & 2                                                          \\
                     &                                  & 2   & $r$ & $u_{1:2}$ &                &        & 3                                                          \\
                     &                                  & 3   & $r$ & $u_{1:3}$ &                &        & 4                                                          \\ \hline \hline
\multirow{3}{*}{\#2} & \multirow{3}{*}{$\texttt{ASGD}$} & 1   & $r$ & $u_{1}$   & $\alpha_{1:3}$ &        & 5                                                          \\
                     &                                  & 2   & $r$ & $u_{1:2}$ & $\alpha_{1:3}$ &        & 6                                                          \\
                     &                                  & 3   & $r$ & $u_{1:3}$ & $\alpha_{1:3}$ &        & 7                                                          \\ \hline \hline
\multirow{4}{*}{\#3} & \multirow{2}{*}{$\texttt{ASGD}$} & 1   & $r$ & $u_{1}$   & $\alpha_{1:3}$ &        & 5                                                          \\
                     &                                  & 2   & $r$ & $u_{1:2}$ & $\alpha_{1:3}$ &        & 6                                                          \\ \cline{2-8} 
                     & \multirow{2}{*}{$\texttt{ADAM}$} & 1   & $r$ & $u_{1}$   & $\beta_{1:3}$  &        & 5                                                          \\
                     &                                  & 2   & $r$ & $u_{1:2}$ & $\beta_{1:3}$  &        & 6                                                          \\ \hline \hline
\multirow{5}{*}{\#4} & \multirow{2}{*}{$\texttt{ASGD}$} & 1   & $r$ & $u_{1}$   & $\alpha_{1:3}$ &        & 5                                                          \\
                     &                                  & 2   & $r$ & $u_{1:2}$ & $\alpha_{1:3}$ &        & 6                                                          \\ \cline{2-8} 
                     & \multirow{3}{*}{$\texttt{ADAM}$} & 1   & $r$ & $u_{1}$   & $\beta_{1:3}$  &        & 5                                                          \\
                     &                                  & 2   & $r$ & $u_{1:2}$ & $\beta_{1:3}$  &        & 6                                                          \\
                     &                                  & 3   & $r$ & $u_{1:3}$ & $\beta_{1:3}$  &        & 7                                                          \\ \hline \hline
\multirow{5}{*}{\#5} & \multirow{2}{*}{$\texttt{ASGD}$} & 1   & $r$ & $u_{1}$   & $\alpha_{1:3}$ & $\rho$ & 6                                                          \\
                     &                                  & 2   & $r$ & $u_{1:2}$ & $\alpha_{1:3}$ & $\rho$ & 7                                                          \\ \cline{2-8} 
                     & \multirow{3}{*}{$\texttt{ADAM}$} & 1   & $r$ & $u_{1}$   & $\beta_{1:3}$  & $\rho$ & 6                                                          \\
                     &                                  & 2   & $r$ & $u_{1:2}$ & $\beta_{1:3}$  & $\rho$ & 7                                                          \\
                     &                                  & 3   & $r$ & $u_{1:3}$ & $\beta_{1:3}$  & $\rho$ & 8                                                          \\ \hline
\end{tabular}
      \caption{Parameters for the {\sf Sub} approach.}
      \label{tab:problems_sub}
    \end{subtable}%
    \begin{subtable}{.5\linewidth}
      \centering
      \footnotesize
      \begin{tabular}{|c|clll|c|c|}
\hline 
Var.              & \multicolumn{4}{c|}{Variables}                                                          & \begin{tabular}[c]{@{}c@{}}\# of \\ variables\end{tabular} & \begin{tabular}[c]{@{}c@{}}\# of \\ $\theta_i^r$\end{tabular} \\ \hline \hline
\multirow{3}{*}{\#1} & \multicolumn{4}{c|}{\multirow{3}{*}{$l, r, u_{1:3}$}}                                   & \multirow{3}{*}{5}                                         & \multirow{3}{*}{2}                                            \\
                     & \multicolumn{4}{c|}{}                                                                   &                                                            &                                                               \\
                     & \multicolumn{4}{c|}{}                                                                   &                                                            &                                                               \\ \hline \hline
\multirow{3}{*}{\#2} & \multicolumn{4}{c|}{\multirow{3}{*}{$l, r, u_{1:3}, \alpha_{1:3}$}}                     & \multirow{3}{*}{8}                                         & \multirow{3}{*}{2}                                            \\
                     & \multicolumn{4}{c|}{}                                                                   &                                                            &                                                               \\
                     & \multicolumn{4}{c|}{}                                                                   &                                                            &                                                               \\ \hline \hline
\multirow{4}{*}{\#3} & \multicolumn{4}{c|}{\multirow{4}{*}{$o,l,r, u_{1:2}, \alpha_{1:3}, \beta_{1:3}$}}       & \multirow{4}{*}{11}                                        & \multirow{4}{*}{7}                                            \\
                     & \multicolumn{4}{c|}{}                                                                   &                                                            &                                                               \\
                     & \multicolumn{4}{c|}{}                                                                   &                                                            &                                                               \\
                     & \multicolumn{4}{c|}{}                                                                   &                                                            &                                                               \\ \hline \hline
\multirow{5}{*}{\#4} & \multicolumn{4}{c|}{\multirow{5}{*}{$o,l,r, u_{1:3}, \alpha_{1:3}, \beta_{1:3}$}}       & \multirow{5}{*}{12}                                        & \multirow{5}{*}{8}                                            \\
                     & \multicolumn{4}{c|}{}                                                                   &                                                            &                                                               \\
                     & \multicolumn{4}{c|}{}                                                                   &                                                            &                                                               \\
                     & \multicolumn{4}{c|}{}                                                                   &                                                            &                                                               \\
                     & \multicolumn{4}{c|}{}                                                                   &                                                            &                                                               \\ \hline \hline
\multirow{5}{*}{\#5} & \multicolumn{4}{c|}{\multirow{5}{*}{$o,l,r, u_{1:3}, \alpha_{1:3}, \beta_{1:3}, \rho$}} & \multirow{5}{*}{13}                                        & \multirow{5}{*}{8}                                            \\
                     & \multicolumn{4}{c|}{}                                                                   &                                                            &                                                               \\
                     & \multicolumn{4}{c|}{}                                                                   &                                                            &                                                               \\
                     & \multicolumn{4}{c|}{}                                                                   &                                                            &                                                               \\
                     & \multicolumn{4}{c|}{}                                                                   &                                                            &                                                               \\ \hline
\end{tabular}       
      \caption{Parameters for the {\sf Meta} approach.}
      \label{tab:problems_graph}
    \end{subtable}
        \caption{Variants of HPD with $u_{1:2}=(u_1,u_2)$, $u_{1:3}=(u_1,u_2, u_3)$, $\alpha_{1:3}=(\alpha_1,\alpha_2,\alpha_3)$ and $\beta_{1:3}=(\beta_1,\beta_2,\beta_3)$.
        }
        \label{tab:problems_new}
\end{table}

\section{Dataset sizes of the instances.}
\label{sec:appendixB_sizes}

Table~\ref{tab:instances_new} details the dataset sizes of all variants for {\sf Meta} and {\sf Sub}.
For variants \#1 and \#2, {\sf Hybrid} aggregates all data as {\sf Meta}.
For the other variants, {\sf Hybrid} aggregates over the hidden layers, but separates the data for each optimizer in Table~\ref{tab:instances_sub}.

\begin{table}[htb!]
    \begin{subtable}{.5\linewidth}
      \centering
      \footnotesize
        \begin{tabular}{|c|c|c|cccc|}
\hline
Variant                 & $o$                              & $l$ & VS & S   & M   & L   \\ \hline \hline
\multirow{3}{*}{\#1} & \multirow{3}{*}{$\texttt{ASGD}$} & 1   & 20 & 30  & 40  & 50  \\
                     &                                  & 2   & 30 & 45  & 60  & 75  \\
                     &                                  & 3   & 40 & 60  & 80  & 100 \\ \hline \hline
\multirow{3}{*}{\#2} & \multirow{3}{*}{$\texttt{ASGD}$} & 1   & 50 & 75  & 100 & 125 \\
                     &                                  & 2   & 60 & 90  & 120 & 150 \\
                     &                                  & 3   & 70 & 105 & 140 & 175 \\ \hline \hline
\multirow{4}{*}{\#3} & \multirow{2}{*}{$\texttt{ASGD}$} & 1   & 50 & 75  & 100 & 125 \\
                     &                                  & 2   & 60 & 90  & 120 & 150 \\ \cline{2-7} 
                     & \multirow{2}{*}{$\texttt{ADAM}$} & 1   & 50 & 75  & 100 & 125 \\
                     &                                  & 2   & 60 & 90  & 120 & 150 \\ \hline \hline
\multirow{5}{*}{\#4} & \multirow{2}{*}{$\texttt{ASGD}$} & 1   & 50 & 75  & 100 & 125 \\
                     &                                  & 2   & 60 & 90  & 120 & 150 \\ \cline{2-7} 
                     & \multirow{3}{*}{$\texttt{ADAM}$} & 1   & 50 & 75  & 100 & 125 \\
                     &                                  & 2   & 60 & 90  & 120 & 150 \\
                     &                                  & 3   & 70 & 105 & 140 & 175 \\ \hline \hline
\multirow{5}{*}{\#5} & \multirow{2}{*}{$\texttt{ASGD}$} & 1   & 60 & 90  & 120 & 150 \\
                     &                                  & 2   & 70 & 105 & 140 & 175 \\ \cline{2-7} 
                     & \multirow{3}{*}{$\texttt{ADAM}$} & 1   & 60 & 90  & 120 & 150 \\
                     &                                  & 2   & 70 & 105 & 140 & 175 \\
                     &                                  & 3   & 80 & 120 & 160 & 200 \\ \hline
\end{tabular}
      \caption{Dataset sizes for {\sf Sub} approach.}
      \label{tab:instances_sub}
    \end{subtable}%
    \begin{subtable}{.5\linewidth}
      \centering
      \footnotesize
      \begin{tabular}{|c|cccc|}
\hline
Variant                 & VS                   & S                    & M                    & L                    \\ \hline \hline
\multirow{3}{*}{\#1} & \multirow{3}{*}{90}  & \multirow{3}{*}{135} & \multirow{3}{*}{180} & \multirow{3}{*}{225} \\
                     &                      &                      &                      &                      \\
                     &                      &                      &                      &                      \\ \hline \hline
\multirow{3}{*}{\#2} & \multirow{3}{*}{180} & \multirow{3}{*}{270} & \multirow{3}{*}{360} & \multirow{3}{*}{450} \\
                     &                      &                      &                      &                      \\
                     &                      &                      &                      &                      \\ \hline \hline
\multirow{4}{*}{\#3} & \multirow{4}{*}{220} & \multirow{4}{*}{330} & \multirow{4}{*}{440} & \multirow{4}{*}{550} \\
                     &                      &                      &                      &                      \\
                     &                      &                      &                      &                      \\
                     &                      &                      &                      &                      \\ \hline \hline
\multirow{5}{*}{\#4} & \multirow{5}{*}{290} & \multirow{5}{*}{435} & \multirow{5}{*}{580} & \multirow{5}{*}{725} \\
                     &                      &                      &                      &                      \\
                     &                      &                      &                      &                      \\
                     &                      &                      &                      &                      \\
                     &                      &                      &                      &                      \\ \hline \hline
\multirow{5}{*}{\#5} & \multirow{5}{*}{340} & \multirow{5}{*}{510} & \multirow{5}{*}{680} & \multirow{5}{*}{850} \\
                     &                      &                      &                      &                      \\
                     &                      &                      &                      &                      \\
                     &                      &                      &                      &                      \\
                     &                      &                      &                      &                      \\ \hline
\end{tabular}
      \caption{Dataset sizes for {\sf Meta} approach.}
      \label{tab:instances_graph}
    \end{subtable}
        \caption{Dataset sizes of the instances, characterized by a variant-size, with sizes amongst Very Small (VS), Small (S), Medium (M) and Large (L).}
        \label{tab:instances_new}
\end{table}

In Table~\ref{tab:instances_graph}, the values in size-related columns are the sums of the values in the rows of the corresponding columns in Table~\ref{tab:instances_sub}, \textit{i.e.}, {\sf Meta} aggregates the data of the subproblems of {\sf Sub}.

{\blr
\section{Data generation.}
\label{sec:appendix_data_generation}

The generation of a data couple $(x, f(x))$ is done by the following three steps:
1) a deep model, either MLP or CNN, is constructed with respect to its given hyperparameters $x \in \mathcal{X}$ with \pytorch;
2) the deep model is trained and validated with 25\% on either the Fashion-MNIST training dataset or CIFAR10 training dataset;
3) the performance score $f(x) \in [0,100]$, that represents the percentage of well-classified images, is computed on the deep model with another 25\% of either the corresponding Fashion-MNIST or CIFAR10 test datasets.
In order to reduce data generation time, only 25\% of the Fashion-MNIST and CIFAR10 datasets are used, and the number of epochs and batch size are respectively set to 25 and 128.
Fashion-MNIST and CIFAR10 are strictly used for generating data couples $(x, f(x))$ that form datasets.
Each dataset is heterogeneous and it is split into a training, validation and test datasets.
}

\clearpage \newpage
\bibliographystyle{plain}
\bibliography{bibliography.bib}
\pdfbookmark[1]{References}{sec-refs}


\end{document}